\def\url#1{\expandafter\string\csname #1\endcsname}
\newcommand{\gradg}[1]{\nabla_\theta \ell \left( #1 \right)}
\newcommand{\grad}{\nabla_\theta}
\newcommand{\W}{\bm{W}}
\newcommand{\Prob}[1]{P\left( #1 \right)}
\newcommand{\M}{\bm{\Psi}}
\newcommand{\Mw}{\bm{\Psi}^{\W \ub}}
\newcommand{\Mv}{\bm{\Psi}^{\vb b}}
\newcommand{\Dr}{\mathcal{D}_r}
\newcommand{\Dc}{\mathcal{D}_c}
\newcommand{\Xit}{\mathcal{X}}
\newcommand{\Yit}{\mathcal{Y}}
\newcommand{\ub}{\bm{u}}
\newcommand{\bb}{\bm{b}}
\newcommand{\x}{\bm{x}}
\newcommand{\xb}{\bm{x}}
\newcommand{\vb}{\bm{v}}
\newcommand{\Vb}{\bm{V}}
\newcommand{\nnw}{\bm{\theta}}
\newcommand{\sigmab}{\bm{\sigma}}
\newcommand{\sigmag}{\bm{\sigma}^g}
\newcommand{\sigmaip}{\bm{\sigma}^{ip}}
\newcommand{\sigmaop}{\bm{\sigma}^{op}}
\newcommand{\Fw}{\mathcal{F}^{\W \ub}}
\newcommand{\norm}[1]{\left\lVert #1 \right\rVert}
\newcommand{\abs}[1]{\left\lvert #1 \right\rvert}
\newcommand{\thetawu}{\theta^{\W \ub}}
\newcommand{\thetavb}{\theta^{\vb b}}
\newcommand{\gradvb}[1]{\nabla_{vb}\ell\left(#1 \right)}
\newcommand{\gradwu}[1]{\nabla_{wu}\ell\left(#1 \right)}
\newcommand{\derive}[2]{\frac{\partial #1}{\partial #2}}
\newcommand{\E}[1]{\mathbb{E}\left[ #1 \right]}
\newcommand{\Exp}[1]{\underset{(\x, y)\sim \mu}{\mathbb{E}}\left[ #1 \right]}
\newcommand{\angles}[1]{\left\langle #1 \right\rangle}
\newcommand{\set}[1]{\left\{ #1 \right\}}
\newcommand{\ind}[1]{\mathds{1}\left( #1 \right)}
\newcommand{\Mvb}{M^{\vb b}}
\newcommand{\Mwu}{M^{\W \ub}}
\newtheorem{lem}{Lemma}
\newtheorem{thm}{Theorem}
\newtheorem{A}{Assumption}
\newtheorem{fact}{Fact}
\title{A Framework for Provably Stable and Consistent Training of Deep Feedforward Networks}
\author{Arunselvan Ramaswamy \\
Department of Mathematics and Computer Science, Karlstad University, 651 88 Karlstad, Sweden, arunselvan.ramaswamy@kau.se \\
Shalabh Bhatnagar \\
Department of Computer Science and Automation, Indian Institute of Science, Bengaluru, India, shalabh@iisc.ac.in \\
Naman Saxena \\
Department of Computer Science and Automation, Indian Institute of Science, Bengaluru, India, namansaxena@iisc.ac.in
}
\begin{document}




\maketitle
\abstract{%
We present a novel algorithm for training deep neural networks in supervised (classification and regression) and unsupervised (reinforcement learning) scenarios. This algorithm combines the standard stochastic gradient descent and the gradient clipping method. The output layer is updated using clipped gradients, the rest of the neural network is updated using standard gradients. Updating the output layer using clipped gradient stabilizes it. We show that the remaining layers are automatically stabilized provided the neural network is only composed of squashing (compact range) activations. We also present a novel squashing activation function - it is obtained by modifying a Gaussian Error Linear Unit (GELU) to have compact range - we call it Truncated GELU (tGELU). Unlike other squashing activations, such as sigmoid, the range of tGELU can be explicitly specified. As a consequence, the problem of vanishing gradients that arise due to a small range, e.g., in the case of a sigmoid activation, is eliminated. We prove that a NN composed of squashing activations (tGELU, sigmoid, etc.), when updated using the algorithm presented herein, is numerically stable and has consistent performance (low variance). The theory is supported by extensive experiments. Within reinforcement learning, as a consequence of our study, we show that target networks in Deep Q-Learning can be omitted, greatly speeding up learning and alleviating memory requirements. Cross-entropy based classification algorithms that suffer from high variance issues are more consistent when trained using our framework. One symptom of numerical instability in training is the high variance of the neural network update values. We show, in theory and through experiments, that our algorithm updates have low variance, and the training loss reduces in a smooth manner.
}%

\noindent
Keywords: Deep Learning Theory, Dynamical Systems, Stability Analysis, Concentration of Measure, Target Networks, Performance Variability \\
MSCClass: Primary: 90B05; secondary: 90C40, 90C90


\maketitle

%

\section{Introduction} \label{sec:intro}

The popularity iof Deep Neural Networks (DNNs) for solving a wide variety of supervised and unsupervised learning problems can be traced back to three milestones. First, the development of a neural network architecture based on the human eye for visual imagery analysis, called the Convolutional Neural Network (CNN). Its wide applicability for image classification and analysis was due to Lecun et. al. \cite{lecun1989backpropagation}, who showed that CNNs can be easily trained using labeled images and the backpropagation algorithm. The popularity of CNNs again got a major boost when Microsoft won the ImageNet contest in 2015 using their 100 layer CNN \cite{he2016deep}. The ImageNet is a hugely popular and important largescale visual object recognition software competition. The second milestone has been the development of Deep Q-Learning (DQL) by the researchers of DeepMind. DQL is a reinforcement learning algorithm that trains a DNN to aid in taking optimal actions in complex sequential decision making tasks. DeepMind popularly used DQL to autonomously learn the videogames ATARI \cite{2} and the boardgame GO \cite{silver2017mastering}. The algorithm was so effective that it beat the best human players in these games. The final milestone has been the quantum leaps in computational capability, specifically the development of powerful Graphics processing units (GPUs) - originally developed to play videogames. Deep Learning got a major boost from GPUs, since many algorithms particularly involving CNNs could be greatly sped up when deployed on them \cite{chellapilla2006high}.

Nowadays, DNNs are trained to recognize and respond to speech, to translate and summarize text, etc. Recently, ChatGPT, a chatbot based on GPT-4, developed by OpenAI has taken the world by storm \cite{gpt}. GPT-4 is a multimodal large language model that summarizes, generates and predicts new content using DNNs trained on massive data. Although applications of Deep Learning to real-world problems have become ubiquitous, we still do not know why, when and how they are effective \cite{sejnowski2020unreasonable}. Further, there are several unexplained paradoxes when training DNNs, greatly affecting credibility. The Learning community has started focusing on the mathematical theory of Deep Learning \cite{roberts2022principles}. As it is complicated to analyze autonomous decision making algorithms, developing the requisit mathematical framework is particularly hard for Deep Reinforcement Learning (DeepRL). The field of DeepRL involves training a DNN using the principles of dynamic programming. The most popular algorithm is called Deep Q-Learning (DQL), the DNN it trains is called Deep Q-Network (DQN). There have been some recent theoretical studies related to Deep Q-Learning, see for example, \cite{fan2020theoretical, fu2019diagnosing, sun2022finite, bowen2021finite}.

Theoretical studies in Deep Learning often develop requirements that guarantee stable training with asymptotic optimality. In practice, most of these requirements are hard to check, hence left unverified. Therefore, learning is not always stable, and when stable is not always optimal. Further, in practice, the algorithm performance is highly variable. It depends on parameters such as the initial random seed \cite{madhyastha2019model, qian2021my, pham2020problems}. This lack of consistency affects the trustworthiness of deep learning algorithms. In this paper, we take the first steps to fill the gap with respect to numerical stability and performance variability. We present a broad study covering both unsupervised and supervised learning.

Broadly speaking, in Deep Learning, a DNN is trained to minimize a given loss function with the help of data. In this paper, we consider (a) regression algorithms with the mean squared loss, (b) classification algorithms with the cross-entropy loss, and (c) Deep Q-Learning (DQL) with the squared Bellman loss. The reader is referred to \cite{goodfellow2016deep, bertsekas2019reinforcement} for more examples of loss functions. While our analysis easily covers many more loss functions, we restrict ourselves to these in order to keep the manuscript concise. With respect to DNN architectures, our analysis covers feedforward networks involving twice continuously differentiable squashing activations. By squashing, we mean that the function range is bounded and compact, despite the domain being potentially unbounded. We show, in theory and through experiments, that squashing activations contribute greatly to stability. We also address the main shortcoming of squashing activation functions - vanishing gradients - that arises due to the limited range of hitherto available squashing activations. For this, we present a novel squashing activation with extended range that we call \textit{Truncated Gaussian Error Linear Unit (tGELU)}. Training a DNN is a minimization process that is iterative in nature. Starting from a random initial value of the DNN weight vector, the aim is to iteratively modify it to find one that minimizes the loss function at hand. Stochastic Gradient Descent (SGD) is a popular algorithm for this process as it is simple yet effective. In this paper, we consider the SGD optimizer.

\subsection{Our Contributions and Relevance to Literature} \label{sec:contributions}

When using Stochastic Gradient Descent for training, Gradient clipping is a simple technique that ensures stability. The gradient value is scaled at every step before updating the neural network weight vector. The scaling is done to ensure that the gradient norm strictly stays below a predetermined ``clipping constant'' \cite{menon2020can, zhang2019gradient}. Using such bounded updates ensures numerical stability. Gradient clipping is especially popular in training large image classification networks that have a recurrent architecture, using the cross-entropy based loss function. 

Particularly in the field of Deep Reinforcement Learning, where the learning environment, and hence the training data, changes rapidly over time, stable training is a major challenge. Within Deep Q-Learning, a target network is an effective gadget to mitigate the stability issue. It is essentially a copy of the Deep Q-Network. While DQN is updated at every timestep, the target network is updated every $T >> 1$ ($T$ much greater than $1$) timesteps. Essentially, the DQN and target networks are synchronized every $T$ timesteps. Numerical instability in DQL occurs due to the high variance of the target value used to calculate the squared Bellman loss. A target network during training is typically used to calculate the target value, and since it is not frequently updated, it reduces variance of the target value and stabilizes training. The main downsides of using a target network are (a) high memory demand, (b) large overhead in maintaining and updating it, and (c) training is slow and the data needed to train DQN is significantly high. It must, however, be noted that using a target network does not guarantee stability but only raises it chances.

\textbf{Stability:} We propose building a DQN using twice continuously differentiable squashing activation functions. For training, we propose using the following modification of the standard SGD. At every timestep, update the output layer weights using clipped gradients, and the remaining weights (input and hidden layer) using regular sample gradients. \textit{We show that a DQN trained in this manner is numerically stable with probability $1$.} We also show that this statement holds for regression and classification algorithms. One significant contribution with respect to DQL is that \textit{we no longer require a target network, our framework stabilizes DQN training without one}. Additionally, in our experiments, we found that performance was better when using our framework, as opposed to traditional DQL using a target network. Eliminating target network is very useful since it frees up memory and speeds up training. Previous works such as \cite{kim2019deepmellow} also endeavor to eliminate target networks, but lack the mathematical backing, to do so. 

\textbf{Consistent Performance:} One major symptom of numerical instability is the high variance of the neural network weights, over the training duration. As stated earlier, parameters such as the initial random seed affect stability \cite{madhyastha2019model}. Scientifically speaking, this should not be the case. Also, high variance is directly linked to performance inconsistency. We show, in theory and through experiments, that DNNs trained using our framework have very low variance. Additionally, \textit{we show that the norm of the DNN weight vector is ``every moment bounded'' over the entire duration of training}. In particular, we show that our framework leads to consistent performance, \textit{independent of the choice of random seeds}.

\textbf{Truncated Gaussian Error Linear Unit (tGELU):} Our framework mandates that the DNN be composed of twice continuously differentiable squashing activations. If we consider the example of the sigmoid activation, its range is $[0,1]$, its derivative is approximately zero outside of the $[-4,4]$ interval. This leads to the vanishing gradients problem that renders SGD ineffective for training \cite{roodschild2020new}. For this reason, practitioners prefer activations with unbounded range such as the Gaussian Error Linear Unit (GELU). Its range is $[0, \infty)$ and, more importantly, its derivative is \textit{non-zero} on $[0, \infty).$ To overcome the vanishing gradients issue, we develop a novel (squashing) activation that we call Truncated Gaussian Error Linear Unit (tGELU). Its range is approximately $[t_l, t_r]$ and its  derivative is \textit{non-zero} on $[t_l, t_r],$ where $-\infty < t_l \le 0 < t_r < \infty.$ The parameters $t_l$ and $t_r$ are fixed by the experimenter. \textit{Our experiments suggest that tGELU facilitates stable training, leads to consistent and better performance, as compared to GELU or (Rectified Error Linear Unit) RELU activations.}

\textbf{DNN Initialization:} Finally, through our analysis we observe that the DNNs must be initialized such that the norm is upper-bounded by a function of the gradient clipping constant. We believe that the distribution used for initialization must have compact support, e.g., the truncated versions of the Gaussian or Laplace distributions. The framework and analysis is general, and can be modified to accommodate other types of training routines such as ADAM, RMSprop, etc.

\section{Definitions and notations} \label{sec:definitions}

In this section, we discuss the relevant notations and definitions.

We use bold-face capital letters to represent \textbf{matrices}, e.g., $\bm{W}, \bm{V}.$ The $i^{th}$ row of matrix $\bm{W}$ is represented by $\bm{W}_{i:}$, while its $j^{th}$ column is represented by $\bm{W}_{:j},$ the element that is in the $i^{th}$ row and $j^{th}$ column is represented by $\bm{W}_{i,j}.$ For \textbf{vectors}, we shall use bold-face small letters, e.g., $\bm{u}, \bm{v}.$ The $i^{th}$ component of $\bm{u}$ is represented by $\bm{u}_i.$ Unless otherwise stated, we shall reserve the sub-script notation to denote components of vectors and Matrices. We shall use $\nnw$ to represent the \textbf{vector of neural network weights}, the updated version at time step $n$ is given by $\nnw(n).$ All time indices are written in this manner, within rounded brackets.

First, recall that the filtration $\{\mathcal{F}(n)\}_{n \ge 0}$ is an increasing sequence of sigma-algebras, i.e., $\mathcal{F}(n) \subseteq \mathcal{F}(n+1), \ n \ge 0.$
A sequence or random variables $\{X(n)\}_{n \ge 0}$ is called a \textbf{martingale} sequence with respect to a filtration $\{\mathcal{F}(n)\}_{n \ge 0},$ when (a) $X(n)$ is $\mathcal{F}(n)$-measurable for all $n \ge 0$, (b) $X(n), \ n \ge 0,$ are integrable (c) $\E{X(n+1) \mid \mathcal{F}(n)} = X(n)$ a.s., for $n \ge 0.$ The random variable sequence constitutes a \textbf{submartingale}, when properties (a) and (b) hold, and $\E{X(n+1) \mid \mathcal{F}(n)} \ge X(n)$ a.s. The reader is referred to \cite{durrett2019probability} for more details.

Suppose the submartingale $\{X(n)\}_{n \ge 0}$ is such that $\abs{X(n) - X(n-1)} \le c(n),$ $c(n) < \infty$ and $n \ge 1.$ Then, for any $0 < N < \infty$ and $0 < \epsilon < \infty,$ the \textbf{Hoeffding-Azuma inequality} gives that:
\begin{equation}
    \label{eq:hoeffding}
    \Prob{\abs{X(N) - X(0)} \ge \epsilon} \le 2 \exp \left(\frac{- \epsilon^2}{2 \sum \limits_{n=1}^{N} c(n)^2} \right).
\end{equation}

Given a sequence of real numbers $\{a(n)\}_{n \ge 0},$ we write $a(n) \downarrow a,$ when $\lim \limits_{n \to \infty} a(n) = a$ and $a(n) > a(n+1)$ for $n \ge 0.$ Similarly, we write $a(n) \uparrow a,$ when $\lim \limits_{n \to \infty} a(n) = a$ and $a(n) < a(n+1)$ for $n \ge 0.$ 

Given a sequence of positive random variables $\{X(n)\}_{n \ge 0}$ ($X(n) \ge 0$ a.s., $n \ge 0$) such that $X(n) \uparrow X$ a.s. \textbf{Monotone Convergence Theorem} states that $\lim \limits_{n \to \infty}\E{X(n)} = \E{X}.$ When $\{X(n)\}_{n \ge 0}$ is a general sequence of random variables (takes positive and negative values with non-zero probability), such that $X(n) \uparrow X$ a.s., we still get that $\lim \limits_{n \to \infty}\E{X(n)} = \E{X}.$ The \textbf{Dominated Convergence Theorem} says that $\lim \limits_{n \to \infty}\E{X(n)} = \E{X},$ provided there exists some positive random variable $Y>0 \ a.s.$ such that $\E{Y} < \infty$ and $X(n) \le Y$ a.s. for $n \ge 0.$ In case, the random variable $Y$ is a constant - $Y \equiv C$ a.s. with $0 < C < \infty,$ then, this special case is called the \textbf{Bounded Convergence Theorem}. The reader is referred to \cite{rudin1976principles, durrett2019probability} for more details.

A point-to-set map $H: \mathbb{R}^m \to \{\text{subsets of }\mathbb{R}^n \},$ for some $m,n \ge 1,$ is called a \textbf{Marchaud map} when it possesses the following properties \cite{aubin2012differential}:
\begin{enumerate}
    \item $H(x)$ is convex and compact for every $x \in \mathbb{R}^m .$
    \item $\sup \limits_{z \in H(x)} \norm{z} \le K(1 + \norm{x})$ for some $K < \infty,$ $x \in \mathbb{R}^m .$
    \item Let $\lim \limits_{k \to \infty} z(k) = z$ in $\mathbb{R}^n,$ $\lim \limits_{k \to \infty} x(k) = x$ in $\mathbb{R}^m,$ with $z(k) \in H(x(k))$ for all $k.$ Then, $z \in H(x).$ This property is called \textbf{upper semicontinuity.}
\end{enumerate}

\section{Neural network architectures and the loss derivatives}\label{sec:nn}
In this section, we introduce the neural network architectures and the associated notations that are needed to present our analyses. Although our analyses are equally applicable for general fully connected deep feedforward networks, we only describe shallow networks here. We do this in order to utilize simple easy-to-remember notations and to present clear succinct analyses. When appropriate, we briefly describe the extensions needed to accommodate deep networks as well. We consider the gamut of deep learning problems, so we will describe generic network architectures for regression, classification and reinforcement learning. We make the following important assumption with respect to the activation functions used to construct our neural networks.

\begin{A}
    \label{asmp:nnact}
    The neural networks are constructed with at least twice continuously differentiable squashing activation functions. In particular, they have bounded derivatives. Examples include sigmoid, tanh and Gaussian activations.
\end{A}
\subsection{Regression} 
\begin{figure}[h]
    \centering
    \includegraphics{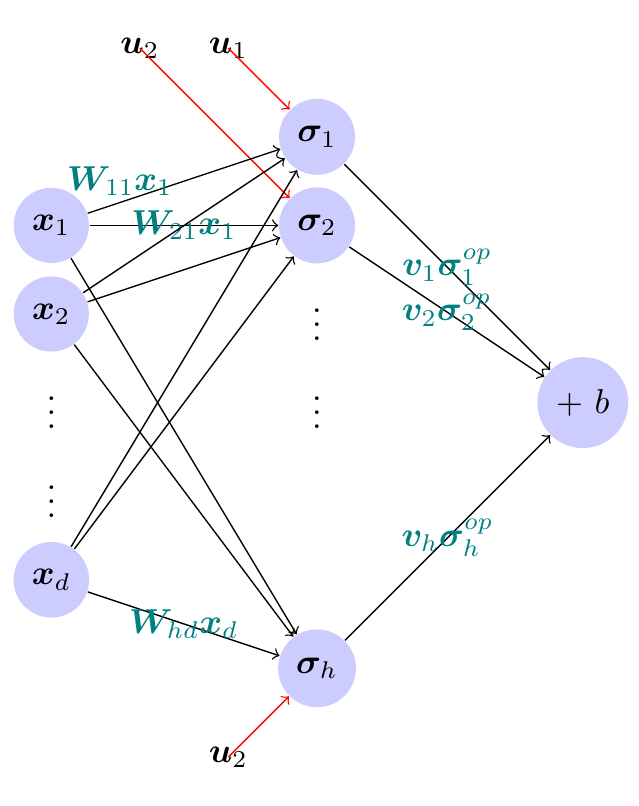}
    \caption{A shallow feedforward neural network for regression}
    \label{fig:ffn}
\end{figure}
Fig.~\ref{fig:ffn} illustrates a shallow fully connected feedforward neural network (NN) for regression problems. Such a NN is typically trained using a dataset $\Dr \equiv \{(x(n), y(n)) \mid x(n) \in \Xit,$ $y(n) \in \Yit, 1 \le n \le N\},$ where $\Xit \subseteq \mathbb{R}^d$ and $\Yit \subset \mathbb{R}.$ In words, $\Xit$, the input space is a subset of some $d$ dimensional real-space and the output space, $\Yit$, is a subset of the real-space. The objective is to train a NN to minimize a loss function such as the mean-squared-loss.

\begin{A}
    \label{asmp:XYcompact}
    $\Xit$ and $\Yit$ are compact subsets of $\mathbb{R}^d$ and $\mathbb{R},$ respectively.
\end{A}

In Fig.~\ref{fig:ffn}, $\x \equiv (\x_1, \ldots, \x_d)$ is the $d$-dimensional input to the NN with $1$ hidden layer with $h$ activations, $\W$ is a $h \times d$ matrix, $\ub$ and $\vb$ are $h$ dimensional vectors, and $b$ is a scalar. These constitute the neural network weights $\nnw \equiv (\W, \ub, \vb, b).$ Since the neural network weights are collated together in a vector form, $\nnw$ is referred to as the weight vector. Further, we let $\thetavb \equiv (\vb, b)$ and $\thetawu \equiv (\W, \ub),$ hence $\nnw \equiv (\thetavb, \thetawu).$ Let $\sigmab_1, \ldots, \sigmab_h$ represent the $h$ activation functions. Assumption~\ref{asmp:nnact} restricts our choice of activation functions to squashing activations that are at least twice continuously differentiable. Examples of such activations include sigmoid, tanh, etc. In Section~\ref{sec:exp}, we present novel modifications to current activation functions that satisfy Assumption~\ref{asmp:nnact}. We show that via such modifications the NNs are able to achieve better performance in terms of both numerical stability and optimality. For example, in Fig.~\ref{fig:tgelu} we illustrate the modifications to GeLU and Tanh activations. In Fig.~\ref{fig:tgeluvsgelu}, we show that these modified activations, which satisfy Assumption~\ref{asmp:nnact}, result in a better performing NN for supervised learning tasks.

The input to the $k^{th}$ activation is $\sigmaip_k \equiv \angles{\W_{k:},\x} + \ub_k,$ where $\W_{k:}$ represents the $k^{th}$ row of the $\W$ matrix. Its output is $\sigmaop \equiv \frac{1}{1 + \exp{(- \sigmaip_k)}}$ when the activation function is sigmoid. Let us define $\sigmab \equiv \left( \sigmaop_1, \ldots, \sigmaop_h \right).$ Then, the output of the NN is $f(\xb, \nnw) \equiv \angles{\sigmab, \vb} + b.$ When processing datapoint $(x,y)$ to train the NN, the mean squared error is given by $\ell_r(\nnw(n), \x(n), y(n)) \equiv \left[ f(\xb(n), \nnw(n)) - y(n) \right]^2.$ Stochastic gradient descent to update the NN weight vector is given by
\begin{equation} \label{eq:regressionupdate_old}
    \nnw(n+1) = \nnw(n) - a(n) \grad \ell_r(\nnw(n), \x(n), y(n)),
\end{equation}
where $\set{a(n)}_{n \ge 0}$ is the given step-size sequence or learning rate. Typically, at time-step $n$ multiple datapoints are processed in order to update the weight vector. In equation \eqref{eq:regressionupdate_old} we consider single-point updates since the analyses for single-point and batch updates are identical, but the kitsch is greatly reduced in the former. The components of $\grad \ell_r(\nnw, x, y)$ are listed below:
\begin{equation}
\label{eq:reggrad}
\begin{split}
    \derive{\ell_r(\nnw, \x, y)}{b} &= 2 (f(\x, \nnw) - y), \\
    \derive{\ell_r(\nnw, \x, y)}{\vb_k} &= 2 (f(\x, \nnw) - y) \sigmaop_k,\ 1 \le k \le h, \\
    \derive{\ell_r(\nnw, \x, y)}{\ub_k} &= 2 (f(\x, \nnw) - y)\sigmag_k \vb_k, \ 1 \le k \le h, \\
    \derive{\ell_r(\nnw, \x, y)}{\W_{jk}} &= 2 (f(\x, \nnw) - y)\sigmag_k \vb_k \x_j, 1 \le k \le h \text{ and } 1 \le j \le d.
    \end{split}
\end{equation}

In the above, $\sigmag_k$ represents the derivative of the $k^{th}$ activation function evaluated at the input. E.g., when the activation function is a sigmoid, then $\sigmag_k = \sigmaop_k (1 - \sigmaop_k)$.
We split the components of $\grad \ell_r$ into two, viz. $\nabla_{vb}\ell_r$ and $\nabla_{wu}\ell_r,$ such that we define $\nabla_{vb}\ell_r \equiv \left( \derive{\ell_r}{\vb_1}, \ldots \derive{\ell_r}{\vb_h}, \derive{\ell_r}{b} \right)$ and $\nabla_{wu}\ell_r \equiv \left( \derive{\ell_r}{\W_{11}}, \ldots \derive{\ell_r}{\W_{dh}}, \derive{\ell_r}{\ub_1}, \ldots, \derive{\ell_r}{\ub_h} \right).$

\begin{lem}
    \label{lem:wubound}
    Under Assumptions~\ref{asmp:nnact} and \ref{asmp:XYcompact}, (i) $\abs{f(\x, \nnw)} \le K(1 + \norm{\thetavb}),$ (ii) $\norm{\nabla_{vb} \ell_r(\nnw, \x, y} \le K \left(1 + \norm{\thetavb} \right)$ and (iii) $\norm{\nabla_{wu} \ell_r(\nnw, \x, y} \le K \left(1 + \norm{\thetavb}^2 \right),$ for some $0 < K <  \infty;$ $\nnw$ is the neural network weight vector; $\x \in \Xit$ and $y \in \Yit.$
\end{lem}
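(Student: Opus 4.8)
The plan is to treat all three bounds as deterministic pointwise estimates following directly from the explicit gradient formulas in \eqref{eq:reggrad}, combined with two sources of uniform boundedness. Assumption~\ref{asmp:nnact} guarantees that each activation output $\sigmaop_k$ and each derivative $\sigmag_k$ is bounded in absolute value by constants $B$ and $D$ independent of the input and the weights; Assumption~\ref{asmp:XYcompact} guarantees $\norm{\x} \le R$ and $\abs{y} \le M$ for all $\x \in \Xit$, $y \in \Yit$. I would prove (i) first and bootstrap it into (ii) and (iii), using throughout that $\norm{\vb} \le \norm{\thetavb}$ and $\abs{b} \le \norm{\thetavb}$ since $\thetavb = (\vb, b)$. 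For (i), writing $f(\x,\nnw) = \angles{\sigmab, \vb} + b$ and applying Cauchy--Schwarz gives $\abs{f(\x,\nnw)} \le \norm{\sigmab}\,\norm{\vb} + \abs{b} \le (B\sqrt{h} + 1)\norm{\thetavb}$, since $\norm{\sigmab} \le B\sqrt{h}$. A consequence I would record for reuse is the linear-in-$\norm{\thetavb}$ residual bound $\abs{f(\x,\nnw) - y} \le \abs{f(\x,\nnw)} + M \le K(1 + \norm{\thetavb})$.

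For (ii), the components of $\nabla_{vb}\ell_r$ are $2(f-y)\sigmaop_k$ and $2(f-y)$, so $\norm{\nabla_{vb}\ell_r} = 2\abs{f-y}\sqrt{\norm{\sigmab}^2 + 1} \le 2\sqrt{B^2 h + 1}\,\abs{f-y}$, and substituting the residual bound yields the claimed $K(1+\norm{\thetavb})$. For (iii), the components of $\nabla_{wu}\ell_r$ are $2(f-y)\sigmag_k \vb_k \x_j$ and $2(f-y)\sigmag_k \vb_k$; factoring out the common $2\abs{f-y}$ and $\sqrt{\norm{\x}^2+1}$ leaves $\sqrt{\sum_k (\sigmag_k)^2 \vb_k^2} \le D\norm{\vb} \le D\norm{\thetavb}$, whence $\norm{\nabla_{wu}\ell_r} \le 2D\sqrt{R^2+1}\,\abs{f-y}\,\norm{\thetavb}$. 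Here the residual contributes one factor $(1+\norm{\thetavb})$ and the explicit $\vb_k$ contributes a second factor $\norm{\thetavb}$, producing $(1+\norm{\thetavb})\norm{\thetavb}$; I would close using the elementary inequality $(1+t)t \le \tfrac{3}{2}(1+t^2)$ to rewrite this in the stated form $K(1+\norm{\thetavb}^2)$.

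There is no genuine analytic obstacle; the computation is routine once the boundedness inputs are assembled. The point that deserves care --- and what I regard as the conceptual content of the lemma --- is the asymmetry between (ii) and (iii): the output-layer gradient is linear in $\norm{\thetavb}$ while the inner-layer gradient is quadratic. This extra power is structural rather than an artifact of loose estimation, arising because the chain rule routes the inner-weight derivatives through the output weights $\vb$, so the factor $\vb_k$ appears explicitly and multiplies the already-linear residual $f - y$. Keeping this distinction sharp is what makes the bounds useful downstream, since the gap between the two rates is precisely what motivates clipping only the $\thetavb$ block while allowing $\thetawu$ to be controlled automatically.
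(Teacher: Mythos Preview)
Your proposal is correct and follows essentially the same route as the paper: Cauchy--Schwarz on $\langle\sigmab,\vb\rangle$ plus the uniform bounds on $\sigmaop_k$, $\sigmag_k$, $\x$, $y$ from Assumptions~\ref{asmp:nnact} and~\ref{asmp:XYcompact} to get the residual bound, then straightforward componentwise estimation of the gradients. Your treatment is slightly more careful in that you assemble the norm from the components explicitly and invoke $(1+t)t \le \tfrac{3}{2}(1+t^2)$ at the end, whereas the paper bounds each partial derivative separately and absorbs the conversion into the constant without comment, but the argument is the same.
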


\begin{proof}

\vspace{.2cm}

In order to show (i), we observe that $\abs{f(\x, \nnw)} \le \abs{\angles{\sigmab, \vb}} + \abs{b}.$ Using the Cauchy-Schwarz inequality, we get $\abs{\angles{\sigmab, \vb}} \le \norm{\vb} \norm{\sigmab}.$ Due to Assumption~\ref{asmp:nnact}, we get that $\norm{\sigmab} \le K_1$ for some $K_1 < \infty.$ Finally, since $\abs{b} \vee \norm{\vb} \le \norm{\thetavb},$ we get that there exists $K < \infty$ such that
\begin{equation}
    \label{wubound1}
    \abs{f(\x, \nnw)} \le K(1 +  \norm{\thetavb}).
\end{equation}

Assumption~\ref{asmp:XYcompact} tells us that we also have

\begin{equation}
    \label{wubound2}
    \sup \limits_{x \in \Xit, y \in \Yit} \ 2\abs{f(\x, \nnw) - y} \le K(1 +  \norm{\thetavb}),
\end{equation}
where, without loss of generality, the constant $K$ is from \eqref{wubound1} as we can choose $K$ to be the maximum of the two constants, otherwise. This directly yields

\begin{equation}
    \label{wubound3}
    \norm{\gradvb{\nnw, \x, y}} \le K \left(1 + \norm{\thetavb} \right).
\end{equation}

We are left to prove (iii). First, observe that, $\abs{\sigmag_k} \le K_2$ and $\abs{\x_j} \le K_3$ for some $K_2, \ K_3 < \infty,$ as a consequence of Assumptions~\ref{asmp:nnact} and \ref{asmp:XYcompact} respectively, $1 \le k \le h$ and $1 \le j \le d.$ Using these, and previously made observations, we get that $\abs{\derive{\ell_r(\nnw, \x, y)}{\W_{jk}}} \le 2 \abs{(f(\x, \nnw) - y)\sigmag_k \vb_k \x_j} \le K_4 (1 + \norm{\thetavb}^2),$ and that $\abs{\derive{\ell_r(\nnw, \x, y)}{\ub_k}} \le K_4 (1 + \norm{\thetavb}^2),$ for some $K_4 < \infty,$ $1 \le k \le h.$ Hence,

\begin{equation}
    \label{wubound4}
    \norm{\gradwu{\nnw, \x, y}} \le K \left(1 + \norm{\thetavb}^2 \right).
\end{equation}

Again, we may assume that the constant $K$ remains unchanged.

\end{proof}

In this paper, we show that the NN can be trained in a numerically stable manner, merely by ensuring numerical stability of the output layer. In particular, we consider and analyze the following update sequence:

\begin{equation} \label{eq:regressionupdate}
    \begin{split}
        \thetavb(n+1) &= \thetavb(n) - a(n) \frac{\nabla_{vb} \ell_r(\nnw(n), \x(n), y(n))}{\large\nicefrac{\norm{\nabla_{vb} \ell_r(\nnw(n), \x(n), y(n))}}{\lambda} \vee 1}, \\
        \thetawu(n+1) &= \thetawu(n) - a(n) \nabla_{wu}\ell_r(\nnw(n), \x(n), y(n)),
    \end{split}
\end{equation}

where $\lambda < \infty$ is a predetermined ``clipping constant''. $\thetavb$ is updated using norm-based gradient clipping, and the update (loss-gradient value) is norm-bounded at every step by $\lambda.$ While the output layer is updated using the clipping method, the inner layer weight vector $\thetawu$ is updated using the standard gradient descent method.

%
%
%

\subsection{Classification} \label{sec:crossentropy}

Similar to regression, every classification problem begins with a dataset $\Dc \equiv \set{(x(n), y(n)) \ \lvert 1 \le n \le N},$ where $x(n) \in \mathbb{R}^d$ and $y(n) \in \{0, \ldots, k-1\}.$ The output instances are also called class labels, and $k$ is the number of classes.
Fig~\ref{fig:crossentropynn} illustrates a shallow feedforward network for the binary classification problem ($k = 2$). The output layer is called the softmax layer. The weights of this network $\theta \equiv \left(\W, \ub, \Vb, \bb \right),$ where $\W$ is a $h \times d$ matrix, $\ub$ is a vector of dimension $h$, $\Vb$ is a matrix of dimension $2 \times h,$ and $\bb$ is a vector of dimension $2.$ Let $\thetawu \equiv \left( \W, \ub \right)$ and $\thetavb \equiv \left( \Vb, \bb \right)$ The cross-entropy loss function is a popular choice to train such a NN. For the binary classification setting, it is given by:
\begin{equation}
    \label{eq:crossentropyloss}
   \ell_c(\theta, \x, y) \equiv -\left[ \mathds{1}(y =0) \log (z_1) + \mathds{1}(y=1) \log (z_2) \right],
\end{equation}
where $z_1$ and $z_2$ are described in Fig.~\ref{fig:crossentropynn}, and $y$ represents the true class label of the the input instance $x$ being processed to train the NN. First, we note that
\begin{figure}
    \centering
    \includegraphics{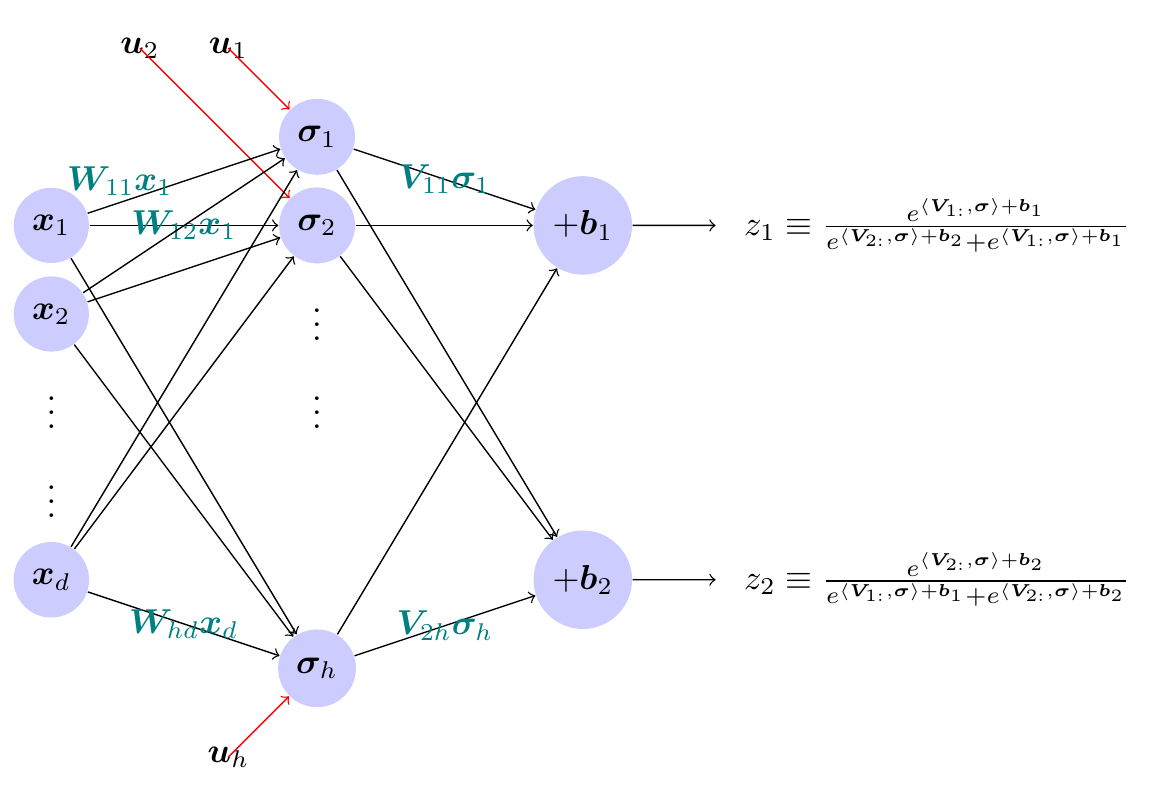}
    \caption{Feedforward network with a soft-max output layer}
    \label{fig:crossentropynn}
\end{figure}
\begin{equation}
    \label{eq:crossentropygradz}
    \begin{split}
        \derive{\ell_c(\theta, \x, y)}{z_1} &= -\frac{\left[ \mathds{1} (y=0) (1 - z_1) - \mathds{1}(y=1) z_1 \right]}{z_1(1-z_1)} = \frac{z_1 - \mathds{1} (y=0)}{z_1(1-z_1)}, \\
        \derive{\ell_c(\theta, \x, y)}{z_2} &= -\frac{\left[ -\mathds{1} (y=0) z_2 + \mathds{1}(y=1) (1-z_2) \right]}{z_2(1-z_2)}= \frac{z_2 - \mathds{1} (y=1)}{z_2(1-z_2)},\\
        \derive{\ell_c(\theta, \x, y)}{c_1} &= z_1 - z_2, \\
        \derive{\ell_c(\theta, \x, y)}{c_2} &= z_2 - z_1,
    \end{split}
\end{equation}
 where $c_i \equiv \angles{\Vb_{i:}, \sigmaop} + \bb_i,$ $1 \le i \le 2$. Note that $z_1(1-z_1) = z_2(1-z_2) = z_1z_2.$ Then, we can derive the following:

\begin{equation} \label{eq:crossentropygrad}
    \begin{split}
        \derive{\ell_c(\theta, \x, y)}{\Vb_{ij}} &=  \sigmab_j \left(z_i - z_{i'} \right), \\
        \derive{\ell_c(\theta, \x, y)}{\bb_i} &= \left(z_i - z_{i'} \right), \\
        \derive{\ell_c(\theta, \x, y)}{\W_{jk}} &= \x_k \derive{\sigmaop_j}{\sigmaip_j} \left[ \left(z_1 - z_{2} \right) \Vb_{1j} + \left(z_2 - z_{1} \right) \Vb_{2j} \right], \\
        \derive{\ell_c(\theta, \x, y)}{\ub_j} &= \derive{\sigmaop_j}{\sigmaip_j} \left[ \left(z_1 - z_{2} \right) \Vb_{1j} + \left(z_2 - z_{1} \right) \Vb_{2j} \right].
    \end{split}
\end{equation}
where $1 \le i \le 2,$ $i' = 3-i,$ $1 \le j \le h$ and $1 \le k \le d.$ Let $\nabla_{wu} \ell_c(\theta, \x, y)$ represent the vector of partial derivatives with respect to the $\W_{ij}'s$ and $\ub_i's$ and let $\nabla_{vb} \ell_c(\theta, \x, y)$ represent the vector of partial derivatives with respect to the $\Vb_{jk}'s$ and $\bb_j's.$ Now, we state something similar to the statement of Lemma~\ref{lem:wubound}.

\begin{lem}
    \label{lem:wubound2} Under Assumptions~\ref{asmp:nnact} and \ref{asmp:XYcompact},
    $\norm{\nabla_{vb} \ell_c(\theta, \x, y)} \le K$ and $\norm{\nabla_{wu} \ell_c(\theta, \x, y)} \le K(1 + \norm{\thetavb})$ for some $K < \infty,$ neural network weight vector $\theta$, $\x \in \Xit$ and $y \in \Yit.$
\end{lem}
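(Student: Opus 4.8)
The plan is to exploit the structural feature that distinguishes this from the regression case: unlike the output $f(\x,\nnw)$ in Lemma~\ref{lem:wubound}, the soft-max outputs $z_1,z_2$ are \emph{intrinsically} bounded. From the soft-max construction in Fig.~\ref{fig:crossentropynn} one has $z_1,z_2 \in [0,1]$ with $z_1 + z_2 = 1$, so $\abs{z_1 - z_2} \le 1$ regardless of the weights. This single fact is what will let the first bound be a constant with no dependence on $\thetavb$. Alongside it I would collect two uniform bounds from Assumption~\ref{asmp:nnact} that get used repeatedly: the squashing property gives $\abs{\sigmaop_j} \le K_1$, and the bounded-derivative property gives $\abs{\derive{\sigmaop_j}{\sigmaip_j}} \le K_2$, for finite constants and all $j$; and from Assumption~\ref{asmp:XYcompact} (compactness of $\Xit$) one has $\abs{\x_k} \le K_3 < \infty$ for all $k$.

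For the first inequality I would read off the components of $\nabla_{vb}\ell_c$ in \eqref{eq:crossentropygrad}. Each $\derive{\ell_c}{\Vb_{ij}} = \sigmab_j(z_i - z_{i'})$ is a product of bounded quantities, hence $\abs{\derive{\ell_c}{\Vb_{ij}}} \le K_1$, and $\abs{\derive{\ell_c}{\bb_i}} = \abs{z_i - z_{i'}} \le 1$. Since $\nabla_{vb}\ell_c$ has only $2h + 2$ components, squaring and taking the root bounds $\norm{\nabla_{vb}\ell_c}$ by a single constant $K$, independent of $\theta$. The point worth emphasising — the cleanest contrast with Lemma~\ref{lem:wubound} — is that no output-layer weight appears in this gradient at all, only the bounded soft-max probabilities and the bounded activations.

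For the second inequality I would turn to $\derive{\ell_c}{\W_{jk}}$ and $\derive{\ell_c}{\ub_j}$. Here the output-layer weights do enter through $\Vb_{1j},\Vb_{2j}$, each of which I bound by $\abs{\Vb_{ij}} \le \norm{\thetavb}$ since $\thetavb \equiv (\Vb,\bb)$; the bracketed factor is then at most $\abs{z_1 - z_2}(\abs{\Vb_{1j}} + \abs{\Vb_{2j}}) \le 2\norm{\thetavb}$. Multiplying by the bounds $\abs{\x_k}\le K_3$ and $\abs{\derive{\sigmaop_j}{\sigmaip_j}}\le K_2$ makes each such partial derivative at most a constant multiple of $\norm{\thetavb}$, and the finitely many components collect into $\norm{\nabla_{wu}\ell_c} \le K(1 + \norm{\thetavb})$, the additive $1$ carried only for uniformity with the other statements.

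I expect no serious obstacle: the lemma is bookkeeping once the soft-max bounds are in hand. The only point needing care is to state $z_1,z_2 \in [0,1]$ with $z_1 + z_2 = 1$ explicitly at the outset, since the excerpt supplies it only through the figure. A secondary subtlety worth flagging rather than burying is that $\nabla_{vb}\ell_c$ is bounded by a \emph{constant} here (not by $1 + \norm{\thetavb}$ as in regression); this is precisely the feature that will make the clipped output-layer update especially well behaved later.
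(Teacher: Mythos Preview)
Your proposal is correct and follows exactly the approach the paper itself takes: the paper's proof simply states that one bounds the right-hand sides of \eqref{eq:crossentropygradz} and \eqref{eq:crossentropygrad} along the lines of Lemma~\ref{lem:wubound}, and your write-up is a faithful, more explicit execution of precisely that. Your emphasis on the soft-max outputs being intrinsically bounded (so that $\norm{\nabla_{vb}\ell_c}$ is a constant rather than linear in $\norm{\thetavb}$) is the key structural observation, and it matches the paper's use of this fact immediately after the lemma to drop the clipping in \eqref{eq:classificationupdate}.
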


\begin{proof}

\vspace{.2cm}

The proof proceeds along similar lines to that of Lemma~\ref{lem:wubound}. The required results are obtained by bounding the right hand sides of \eqref{eq:crossentropygradz} and \eqref{eq:crossentropygrad}.

\end{proof}

Since the gradient values used to update the output layer are bounded, we do not need to explicitly clip them. Alternatively, if we choose $\lambda \equiv K+1,$ then it follows from Lemma~\ref{lem:wubound2} that the clipping condition will never be satisfied. Hence, when using the cross entropy loss, the NN can be updated using the following simple gradient descent algorithm:

\begin{equation} \label{eq:classificationupdate}
    \nnw(n+1) = \nnw(n) - a(n) \nabla_\theta \ell_c(\nnw(n), \x(n), y(n)).
\end{equation}

\subsection{Deep Q-Learning} \label{sec:dqn}

Deep Q-Learning is an important algorithm for solving sequential decision making problems. The goal is to interact with an underlying system over time, and take a sequence of decisions that maximizes the accumulated rewards. At time $t,$ decision $a(t)$ causes the underlying system to transition from state $x(t)$ to state $x(t+1).$ The system, at every instant, provides feedback in the form of a real-valued reward $r(x(t), a(t)).$ Formally speaking, in Deep Q-Learning, the Q-Network, illustrated in Fig.~\ref{fig:dqnn}, is trained to find a policy $\pi: \mathcal{S} \to \mathcal{A}$ such that $\sum \limits_{s \ge t_0} \gamma^{s - t_0} r(x(s), \pi(x(s)))$ is maximized, $0 \le \gamma \le 1$ is the discount factor and $\pi$ is a mapping from the system state-space $\mathcal{S}$ to the decision space $\mathcal{A}.$ The Q-Network weights $\nnw \equiv (\W, \ub, \Vb, \bb)$ parameterize the policy space, and the policy associated with $\nnw$ is represented by $\pi(\cdotp; \nnw),$ such that $\pi(\x, \nnw) = \underset{a \in \mathcal{A}}{\text{argmax}} Q(x,a, \nnw).$ It additionally parameterizes the space of Q-factors. Specifically, the Q-factor $Q(\x(t_0), a(t_0), \theta) = r(x(t_0), a(t_0)) + \mathbb{E}_{\x(s) \sim \tau(s), s > t_0}\left[\sum \limits_{s \ge t_0 + 1} \gamma^{s - t_0} r(x(s), \pi(\x(s), \nnw))\right],$ where $\tau(\cdotp)$ represents the, possibly time-varying, state transition kernel. It represents the expected cumulative discounted reward obtained when action $a(t_0)$ is picked when the system is in $x(t_0),$ following which the actions are picked in accordance to $\pi(\cdotp, \theta).$ The expected squared Bellman loss that must be calculated at time $t$ to train the DQN is given by

\begin{equation}
    \label{eq:Bellmanloss_expected}
    \ell_b(\nnw, \x,a) \equiv \left[ Q(\x,a,\nnw) - r(\x,a) - \gamma \ \max \limits_{a' \in \mathcal{A}} \ \mathbb{E}_{\x' \sim \tau(t)} Q(\x', a', \nnw)  \right]^2,
\end{equation}

where the system is in state $\x$ at time $t,$ and some action $a$ is taken.
In practice however, the state transition kernel is unknown. Hence, in order to find $\theta^*$ such that $Q(\x, \pi(\x,\nnw^*)) \ge Q(\x, \hat{\pi}(\x)),$ for every other policy $\hat{\pi}$, we train the Q-Network to minimize the following noisy sample-based squared Bellman:

\begin{equation}
    \label{eq:Bellmanloss}
    \ell_b(\nnw, \x,a) \equiv \left( Q(\x,a,\nnw) - r(\x,a) - \gamma \ \max \limits_{a' \in \mathcal{A}} Q(\x', a', \nnw) \right)^2.
\end{equation}

\begin{figure}
    \centering
    \includegraphics{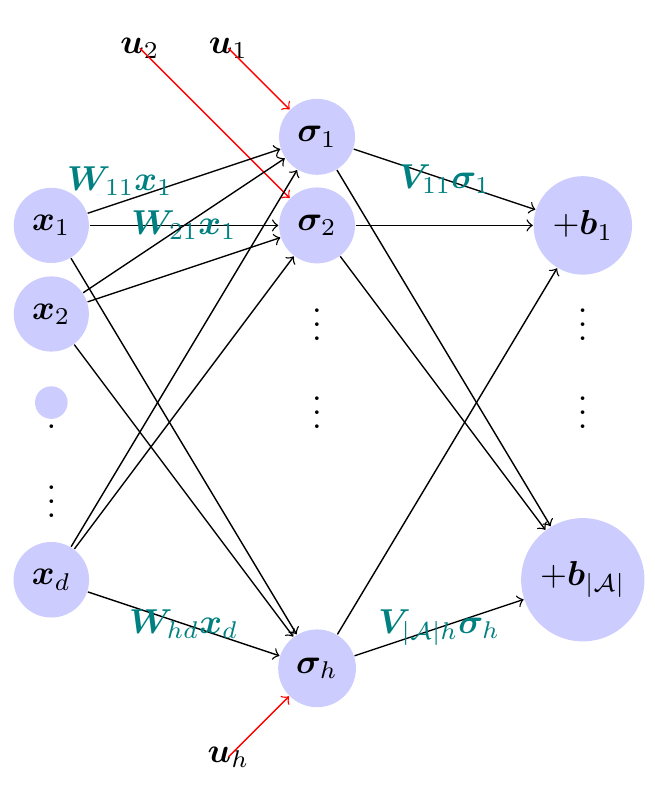}
    \caption{A shallow Q-Network}
    \label{fig:dqnn}
\end{figure}

The input to the Q-Network, illustrated in Fig.~\ref{fig:dqnn}, is the a state value $\x,$ in this case we have $\x \in \mathbb{R}^d.$ The Q-Network weight vector $\theta \equiv (\W, \ub, \Vb, \bb),$ where $\W$ is a $h\times d$ matrix, $\ub$ is a $h$ dimensional vector, $\Vb$ is a $\abs{\mathcal{A}} \times h$ matrix, and $\bb$ is a $\abs{\mathcal{A}}$ dimensional vector. Let $\mathcal{A} \equiv \{a_1 \ldots, a_M\},$ then $Q(\x, a_m, \theta) \equiv \angles{\sigmab, \Vb_{m:}} + \bb_m,$ $1 \le m \le M,$ $\sigmab \equiv (\sigmaop_1, \ldots, \sigmaop_h).$ As in the previous two sections, we list the relevant loss gradients below. Note that they are based on the sample Bellman loss. In particular, at the time of calculating the Bellman loos, the next state $\x'$ is sampled from the system when action $a$ is taken in state $\x.$
\begin{equation}
    \label{eq:dqngradient}
    \begin{split}
        \derive{\ell_b(\nnw, \x,a)}{\Vb_{ij}} &= 2 \left( Q(\x,a,\nnw) - r(\x,a) - \gamma \ \max \limits_{a' \in \mathcal{A}} Q(\x', a', \theta) \right) \sigmab_j, \\
        \derive{\ell_b(\nnw, \x,a)}{\bb_i} &= 2 \left( Q(\x,a,\nnw) - r(\x,a) - \gamma \ \max \limits_{a' \in \mathcal{A}} Q(\x', a', \nnw) \right), \\
        \derive{\ell_b(\nnw, \x,a,)}{\W_{jk}} &= 2 \left(  Q(x,a,\nnw) - r(\x,a) - \gamma \ \max \limits_{a' \in \mathcal{A}} Q(\x', a', \nnw) \right) \x_j \derive{\sigmaop_j}{\sigmaip_j} \sum \limits_{m=1}^{|\mathcal{A}|} \Vb_{mj}, \\
        \derive{\ell_b(\nnw, \x,a)}{\ub_j} &= 2 \left(  Q(\x,a,\nnw) - r(\x,a) - \gamma \ \max \limits_{a' \in \mathcal{A}} Q(\x', a', \nnw) \right) \derive{\sigmaop_j}{\sigmaip_j} \sum \limits_{m=1}^{|\mathcal{A}|} \Vb_{mj},
    \end{split}
\end{equation}

where $\mathcal{A}$ is the finite action space, $1 \le i \le |\mathcal{A}|,$ $1 \le j \le h$ and $1 \le m \le d.$ Let $\thetavb \equiv (\Vb, \bb),$ $\thetawu \equiv (\W, \ub),$ $\nabla_{vb} \ell_b(\nnw, \x,a) \equiv \left( \derive{ \ell_b(\nnw,\x,a)}{\Vb_{ij}}, \derive{\ell_b(\nnw,\x,a)}{\bb_i} \right)_{1 \le i \le |\mathcal{A}|, 1 \le j \le h}$ and $\nabla_{wu} \ell_b(\nnw,\x,a) \equiv \left(\derive{\ell_b(\nnw,\x,a)}{\W_{jk}}, \derive{\ell_b(\nnw,\x,a)}{\ub_j} \right)_{1 \le k \le d, 1 \le j \le h}.$ We make the following assumption with respect to the rewards.

\begin{A}
    \label{asmp:reward}
    $\mathcal{S} \subset \mathbb{R}^d$ is compact, $\mathcal{A}$ is finite and discrete, and $\sup \limits_{\x \in \mathcal{S}, \ a \in \mathcal{A}} \abs{r(\x, a)} < \infty.$
\end{A}

Recall that $Q(\x, a_m, \nnw) \equiv \angles{\sigmab, \Vb_{m:}} + \bb_m,$ $1 \le m \le M,$ where $M$ is the total number of possible actions, it now follows from Assumption~\ref{asmp:nnact} that $\max \limits_{a \in \mathcal{A}} \abs{Q(\x, a, \nnw)} \le K(1 + \norm{\thetavb})$ for some $K < \infty.$ Without loss of generality, we have the same $K$ as Lemmas~\ref{lem:wubound} and \ref{lem:wubound2}. Hence, we may say that $\left( Q(\x,a,\nnw) - r(\x,a) - \gamma \ \max \limits_{a' \in \mathcal{A}} Q(\x', a', \theta) \right)$ in \eqref{eq:dqngradient} is akin to $\left( f(\x, \theta) - y \right)$ in \eqref{eq:reggrad}. Then, the partial derivatives \eqref{eq:dqngradient} are similar to those listed in \eqref{eq:reggrad}. We can therefore expect them to have similar properties. We state the following Lemma without proving it, since the steps involved are identical to those from the proof of Lemma~\ref{lem:wubound}.

\begin{lem}
    \label{lem:wubound3}
    Under Assumptions~\ref{asmp:nnact}, \ref{asmp:XYcompact} and \ref{asmp:reward}, (i) $ \abs{ Q(\x, a, \nnw)} \le K(1 + \norm{\thetavb}),$ (ii) $\norm{\nabla_{vb} \ell_b(\nnw,\x, a)} \le K \left(1 + \norm{\thetavb} \right)$ and (iii) $\norm{\nabla_{wu} \ell_b(\nnw,\x, a)} \le K \left(1 + \norm{\thetavb}^2 \right),$ for some $0 < K <  \infty;$ $\nnw$ is the neural network weight vector; $\x \in \mathcal{S}$ and $a \in \mathcal{A}.$
\end{lem}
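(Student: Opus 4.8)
The plan is to mirror the proof of Lemma~\ref{lem:wubound} step for step, with the single TD-error factor $\left( Q(\x,a,\nnw) - r(\x,a) - \gamma \max_{a' \in \mathcal{A}} Q(\x', a', \nnw) \right)$ playing the role that $(f(\x,\nnw)-y)$ played in the regression case. Part (i) is already established in the paragraph preceding the statement: writing $Q(\x,a_m,\nnw) = \angles{\sigmab, \Vb_{m:}} + \bb_m$ and applying Cauchy--Schwarz gives $\abs{\angles{\sigmab,\Vb_{m:}}} \le \norm{\sigmab}\norm{\Vb_{m:}}$, then Assumption~\ref{asmp:nnact} bounds $\norm{\sigmab} \le K_1$ and the observation $\abs{\bb_m} \vee \norm{\Vb_{m:}} \le \norm{\thetavb}$ yields $\abs{Q(\x,a,\nnw)} \le K(1+\norm{\thetavb})$; taking the maximum over the finite action set $\mathcal{A}$ preserves this bound.

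The crucial intermediate step, and the one genuinely new ingredient relative to Lemma~\ref{lem:wubound}, is to show that the TD-error factor itself grows at most linearly in $\norm{\thetavb}$. Here I would invoke Assumption~\ref{asmp:reward} to write $\abs{r(\x,a)} \le R \equiv \sup_{\x \in \mathcal{S}, a \in \mathcal{A}} \abs{r(\x,a)} < \infty$, and then apply part (i) to \emph{both} the current-state term $Q(\x,a,\nnw)$ and the next-state term $\max_{a' \in \mathcal{A}} Q(\x',a',\nnw)$. The latter requires that the next state $\x'$ again lies in the state space $\mathcal{S}$, and that the maximum is over finitely many uniformly bounded quantities, both of which hold by Assumption~\ref{asmp:reward}. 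Combining these, $\abs{Q(\x,a,\nnw) - r(\x,a) - \gamma \max_{a' \in \mathcal{A}} Q(\x',a',\nnw)} \le K(1+\norm{\thetavb}) + R + \gamma K(1+\norm{\thetavb}) \le K'(1+\norm{\thetavb})$ for a suitably enlarged constant $K'$.

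With the TD-error factor controlled, parts (ii) and (iii) are routine inspections of \eqref{eq:dqngradient}. For (ii), each component of $\nabla_{vb}\ell_b(\nnw,\x,a)$ is the TD-error factor times either $\sigmab_j$ (bounded by Assumption~\ref{asmp:nnact}) or $1$, so every component is $O(1+\norm{\thetavb})$ and hence $\norm{\nabla_{vb}\ell_b(\nnw,\x,a)} \le K(1+\norm{\thetavb})$. For (iii), each component of $\nabla_{wu}\ell_b(\nnw,\x,a)$ additionally carries the factors $\x_j$ (bounded by Assumption~\ref{asmp:XYcompact}), $\derive{\sigmaop_j}{\sigmaip_j}$ (bounded by Assumption~\ref{asmp:nnact}, which is precisely why the bound depends on $\thetavb$ and not on $\thetawu$), and the sum $\sum_{m=1}^{\abs{\mathcal{A}}} \Vb_{mj}$; this last factor is bounded by $\abs{\mathcal{A}}\max_m \abs{\Vb_{mj}} \le \abs{\mathcal{A}} \norm{\thetavb}$, contributing one extra power of $\norm{\thetavb}$. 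Multiplying the linear TD-error bound by this linear factor gives a component bound of order $(1+\norm{\thetavb})\norm{\thetavb} \le K(1+\norm{\thetavb}^2)$, whence $\norm{\nabla_{wu}\ell_b(\nnw,\x,a)} \le K(1+\norm{\thetavb}^2)$, and absorbing all constants into a single $K$ completes the argument.

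The only real obstacle I anticipate is the bookkeeping around the three terms peculiar to the Bellman loss that are absent from the regression derivation: the reward $r(\x,a)$, the discounted max over next actions $\gamma \max_{a'} Q(\x',a',\nnw)$, and the action-sum $\sum_{m} \Vb_{mj}$ appearing in the inner-layer gradients. The conceptual content is identical to Lemma~\ref{lem:wubound}; one simply has to verify that each of these new pieces is either uniformly bounded (via Assumption~\ref{asmp:reward} and the finiteness of $\mathcal{A}$) or grows at most linearly in $\norm{\thetavb}$, so that the linear growth of $\nabla_{vb}\ell_b$ and the quadratic growth of $\nabla_{wu}\ell_b$ are preserved exactly as in the regression case.
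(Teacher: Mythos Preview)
Your proposal is correct and follows exactly the approach the paper takes: the paper does not even write out a proof for this lemma, noting that the TD-error factor $\left( Q(\x,a,\nnw) - r(\x,a) - \gamma \max_{a'} Q(\x',a',\nnw) \right)$ plays the role of $(f(\x,\nnw)-y)$ in \eqref{eq:reggrad} and that the steps are identical to those of Lemma~\ref{lem:wubound}. Your write-up simply makes those identical steps explicit, including the bookkeeping for the reward bound and the action sum, which is entirely in line with the paper's intent.
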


We thus analyze the following variant of the Deep Q-Learning algorithm where the output layer alone is updated using clipping.

\begin{equation} \label{eq:dqnupdate}
    \begin{split}
        \thetavb(n+1) &= \thetavb(n) - a(n) \frac{1}{K} \sum \limits_{k=1}^K \frac{\nabla_{vb} \ell_b(\nnw(n), \x(n,k), \mathds{\alpha}(n,k))}{\large\nicefrac{\norm{\nabla_{vb} \ell_b(\nnw(n), \x(n,k), \mathds{\alpha}(n,k))}}{\lambda} \vee 1}, \\
        \thetawu(n+1) &= \thetawu(n) - a(n) \frac{1}{K} \sum \limits_{k=1}^K \nabla_{wu}\ell_b(\nnw(n), \x(n,k), \mathds{\alpha} (n,k)),
    \end{split}
\end{equation}
where $K$ is the mini-batch size, and $(\x(n,k), \alpha (n,k)), \x'(n,k))$ is a sample from the experience replay buffer that stores past system interactions. Recall that the first component of the tuple is the system state, the second is the action taken in that state, and the third is the state to which the system transitions as a consequence.

\section{Algorithm, stability and moment bounds} \label{sec:algo}
In this section, we use the technical lemmas proven in the previous section to show the stability of the modified regression classification and Q-Learning updates, described by \eqref{eq:regressionupdate}, \eqref{eq:classificationupdate} and \eqref{eq:dqnupdate}, respectively. We will also show that the weight updates satisfy certain desirable moment conditions. In order to avoid redundancy, we will analyze the following generic iteration:
\begin{equation}
    \label{eq:genericupdate}
     \begin{split}
        \thetavb(n+1) &= \thetavb(n) - a(n) \frac{\nabla_{vb} \ell(\nnw(n), \x(n), y(n))}{\large\frac{\norm{\nabla_{vb} \ell(\nnw(n), \x(n), y(n))}}{\lambda} \vee 1} , \\
        \thetawu(n+1) &= \thetawu(n) - a(n) \nabla_{wu}\ell(\nnw(n), \x(n), y(n)),
    \end{split}
\end{equation}
where $\thetavb \equiv (\vb, \bb)$ and $\thetawu \equiv (\W, \ub)$ generically represent the output and the input layer weights, respectively. Note that we have dropped the subscript from the loss function $\ell$. Lemmas~\ref{lem:wubound}, \ref{lem:wubound2} and \ref{lem:wubound3} make qualitatively similar statements in the regression, classification and Q-Learning contexts, respectively. We will only need these Lemmas for the analyses presented in this section. Also, note that we have omitted the mini-batch updates. Again, it will be clear that the stability analysis in this section will remain unaltered in the presence of mini-batches for training. It will become relevant when analyzing the convergence properties, especially for Q-Learning. Again, the generic loss function, $\ell \equiv \ell_r$ for regression and Lemma~\ref{lem:wubound}, $\ell \equiv \ell_c$ for classification and Lemma~\ref{lem:wubound2}, and $\ell \equiv \ell_b$ for Q-Learning and Lemma~\ref{lem:wubound3} become relevant. 

\subsection{The output layer behavior}

We make an important assumption with regards to the initialization of the neural network weights. Although we allow for random initialization, we assume that they are norm-bounded by a prespecified value.
\begin{A} \label{asmp:initial}
$\norm{\theta(0)} \le \lambda \ a.s.,$ where $\lambda < \infty$ is a prespecified fixed value.
\end{A}
\begin{A} \label{asmp:steps}
$a(m) > 0$ for all $m \ge 0,$ $\sum \limits_{m \ge 0} a(m) = \infty$ and $\sum \limits_{m \ge 0} a(m)^2 < \infty.$
\end{A}
The output layer weights - $\thetavb$ - are updated using clipped gradients, the $(n+1)^{st}$ update step is given by
\begin{equation}
\label{eq:vbclip}
    \thetavb(n+1) = \thetavb(n) - a(n) g(n),
\end{equation}
where $g(n)$ is the clipped version of $\gradvb{\thetavb(n), x(n), y(n)},$ such that $\norm{g(n)} \le \lambda$ for some prespecified fixed $\infty > \lambda > 0.$ Suppose, we use the classical norm based clipping method, then $$g(n) \equiv \frac{\gradvb{\thetavb(n), x(n), y(n)}}{\frac{\norm{\gradvb{\thetavb(n), x(n), y(n)}}}{\lambda} \vee 1}.$$ 

Let us define a stochastic process $\{\Mv(n)\}_{n \ge 0}$ and associate the natural filtration with it: $\Mv(0) \equiv 0$ and $\Mv(n) = \norm{\thetavb(0)} + \sum \limits_{m=0}^{n-1} a(n) \norm{g(n)},$ $n \ge 1.$ It is easy to see that $\{\Mv(n)\}_{n \ge 0}$ is a submartingale. We further get that $\abs{\Mv(n) - \Mv(n-1)} \le a(n-1)\lambda$ for $n \ge 2,$ provided we initialize the weights such that their norm is less than $\lambda.$ From the Hoeffding-Azuma inequality, we get
\begin{equation}
    \label{eq:HA}
    P(\Mv(n) \ge x) \le \exp{\left( \frac{-x^2}{\lambda^2 \sum \limits_{m \ge 0} a(m)^2} \right)} \text{ for all } x \ge 0.
\end{equation}
\begin{lem} \label{lem:vbstable}
$\sup \limits_{m \ge 0} \norm{\thetavb(m)} < \infty \ a.s.$
\end{lem}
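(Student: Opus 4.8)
The plan is to dominate the output-layer weights by the monotone process $\Mv$ and then show that this dominating process stays finite almost surely. Writing the clipped update \eqref{eq:vbclip} in telescoped form, $\thetavb(n) = \thetavb(0) - \sum_{m=0}^{n-1} a(m) g(m)$, the triangle inequality gives $\norm{\thetavb(n)} \le \norm{\thetavb(0)} + \sum_{m=0}^{n-1} a(m)\norm{g(m)} = \Mv(n)$ for every $n$. Since each summand $a(m)\norm{g(m)}$ is non-negative, $\{\Mv(n)\}_{n\ge0}$ is non-decreasing and hence converges in $[0,\infty]$ to a limit, which I denote $\Mv(\infty)$. Consequently $\sup_{m \ge 0}\norm{\thetavb(m)} \le \Mv(\infty)$, so it suffices to prove $\Mv(\infty) < \infty$ almost surely.

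First I would point out why a purely deterministic estimate is insufficient: the crude bound $\norm{g(m)} \le \lambda$ only yields $\Mv(\infty) \le \norm{\thetavb(0)} + \lambda \sum_{m \ge 0} a(m)$, which is infinite because $\sum_{m\ge0} a(m) = \infty$ by Assumption~\ref{asmp:steps}. The finiteness must therefore come from the concentration estimate \eqref{eq:HA}, whose crucial feature is that its right-hand side $\exp\!\left(\frac{-x^2}{\lambda^2 \sum_{m \ge 0} a(m)^2}\right)$ does not depend on $n$; here the square-summability $\sum_{m\ge0} a(m)^2 < \infty$ (again Assumption~\ref{asmp:steps}), together with the bounded increments $\abs{\Mv(n) - \Mv(n-1)} \le a(n-1)\lambda$ that are secured by the initialization bound (Assumption~\ref{asmp:initial}), is exactly what renders the tail bound uniform in $n$.

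The main step is then to upgrade this family of fixed-$n$ tail bounds into a tail bound on the monotone limit $\Mv(\infty)$. For fixed $x > 0$, monotonicity of $\Mv$ gives $\{\Mv(\infty) > x\} = \bigcup_{n} \{\Mv(n) > x\}$, an increasing union of events, so continuity of measure from below yields $\Prob{\Mv(\infty) > x} = \lim_n \Prob{\Mv(n) > x} \le \exp\!\left(\frac{-x^2}{\lambda^2 \sum_{m \ge 0} a(m)^2}\right)$, the estimate surviving the limit precisely because it is uniform in $n$. Finally, letting $x \to \infty$ along the decreasing family $\{\Mv(\infty) > x\}$, continuity of measure from above gives $\Prob{\Mv(\infty) = \infty} = \lim_{x \to \infty}\Prob{\Mv(\infty) > x} = 0$, so $\Mv(\infty) < \infty$ and a fortiori $\sup_{m\ge0}\norm{\thetavb(m)} < \infty$ almost surely. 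I expect the transfer of the uniform pointwise tail bound into almost-sure finiteness of the supremum to be the only genuinely delicate point; everything else reduces to the triangle inequality, monotonicity of $\Mv$, and the already-established inequality \eqref{eq:HA}.
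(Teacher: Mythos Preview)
Your proof is correct and follows the same strategy as the paper: dominate $\norm{\thetavb(n)}$ by the monotone process $\Mv(n)$, invoke the uniform-in-$n$ Hoeffding--Azuma tail bound \eqref{eq:HA}, and pass to the limit. Your order of limits---first sending $n\to\infty$ via monotone convergence of the events $\{\Mv(n)>x\}$ to obtain a tail bound on $\Mv(\infty)$, and only then letting $x\to\infty$---is in fact the cleaner way to make the supremum argument rigorous; the paper instead lets $x\to\infty$ first for each fixed $n$ and then takes $n\to\infty$ on the events $\{\sup_{0\le m\le n}\norm{\thetavb(m)}<\infty\}$.
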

\begin{proof}

\vspace{.2cm}

Let us recall from above that $P(\Mv(n) \ge x) \le \exp{\left( \frac{-x^2}{\lambda^2 \sum \limits_{m \ge 0} a(m)^2} \right)}$ for all $x \ge 0, \ n \ge 0.$ From this we get, $\lim \limits_{x \to \infty} P(\Mv(n) \ge x) = P(\Mv(n) = \infty) = 0,$ and hence, $\Prob{\Mv(n) < \infty} = 1.$ Since $\Mv(n) \ge \sup \limits_{0 \le m \le n} \ \norm{\thetavb(n)} \ a.s.,$ we have $\Prob{\sup \limits_{0 \le m \le n} \ \norm{\thetavb(m)} < \infty} = 1.$ Now, the event sequence $\left\{\sup \limits_{0 \le m \le n} \ \norm{\thetavb(m)} < \infty \right\}_{ n \ge 0}$ converges to $ \left\{\sup \limits_{m \ge 0} \ \norm{\thetavb(m)} < \infty \right\}$ as $n \uparrow \infty.$ Hence, $$\lim \limits_{n \uparrow \infty} \Prob{\sup \limits_{0 \le m \le n} \ \norm{\thetavb(m)} < \infty } = \Prob{\sup \limits_{m \ge 0} \ \norm{\thetavb(m)} < \infty} = 1.$$

\end{proof}

\begin{lem} \label{lem:vbvar}
    For all $k \ge 1$, $\sup \limits_{n \ge 0} \ \E{\Mv(n)^k} \le B(k) < \infty.$ Hence, $\sup \limits_{n \ge 0} \ \E{(\sup \limits_{0 \le m \le n}\norm{\thetavb(m)})^k} \le B(k)$ and $\E{(\sup \limits_{n \ge 0}\norm{\thetavb(n)})^k} \le B(k)$. The bound $B(k)$ is dependent on $k$ alone.
\end{lem}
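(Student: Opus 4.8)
The plan is to exploit the uniform (in $n$) Gaussian-type tail bound \eqref{eq:HA} and convert it into a uniform moment bound via the layer-cake representation of the $k$-th moment. Set $C \equiv \lambda^2 \sum \limits_{m \ge 0} a(m)^2$, which is finite by Assumption~\ref{asmp:steps}; the crucial feature is that \eqref{eq:HA} reads $\Prob{\Mv(n) \ge x} \le \exp\left(-x^2/C\right)$ for every $x \ge 0$ and every $n \ge 0$, with the constant $C$ \emph{independent of} $n$.

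First I would bound $\E{\Mv(n)^k}$. Since $\Mv(n) \ge 0$ a.s., the standard identity for nonnegative random variables gives
\[
\E{\Mv(n)^k} = \int_0^\infty k\, x^{k-1} \Prob{\Mv(n) \ge x}\,dx \le \int_0^\infty k\, x^{k-1} \exp\left(-\frac{x^2}{C}\right) dx .
\]
The right-hand side is a finite constant that does not involve $n$; the substitution $u = x^2/C$ evaluates it to $\tfrac{k}{2}\, C^{k/2}\, \Gamma(k/2) =: B(k)$, which depends only on $k$ once $\lambda$ and the step-size sequence are fixed. Taking the supremum over $n$ yields $\sup \limits_{n \ge 0} \E{\Mv(n)^k} \le B(k)$, the first assertion.

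Next I would transfer this bound to $\thetavb$. As noted in the proof of Lemma~\ref{lem:vbstable}, the triangle inequality applied to the telescoped update gives $\Mv(n) \ge \sup \limits_{0 \le m \le n}\norm{\thetavb(m)}$ a.s. Raising both sides to the $k$-th power and taking expectations gives $\E{(\sup \limits_{0 \le m \le n}\norm{\thetavb(m)})^k} \le \E{\Mv(n)^k} \le B(k)$ for each $n$, which proves the second claim after taking $\sup_n$. For the final inequality I would invoke the Monotone Convergence Theorem: the sequence $(\sup \limits_{0 \le m \le n}\norm{\thetavb(m)})^k$ is nonnegative and nondecreasing in $n$ and converges a.s. to $(\sup \limits_{m \ge 0}\norm{\thetavb(m)})^k$, so $\E{(\sup \limits_{n \ge 0}\norm{\thetavb(n)})^k} = \lim \limits_{n \to \infty}\E{(\sup \limits_{0 \le m \le n}\norm{\thetavb(m)})^k} \le B(k)$, which in passing re-confirms finiteness a.s.

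The computations are entirely routine; the one point I would be most careful about is that the constant $C$ in the tail bound is genuinely independent of $n$, since this is what makes the moment bound $B(k)$ uniform over all iterates. This is precisely what the Hoeffding-Azuma inequality delivers here, because the per-step increments are bounded by $a(n-1)\lambda$ and the squared step-sizes are summable by Assumption~\ref{asmp:steps}; without the uniformity of $C$ the argument would only give a per-$n$ bound and the $\sup_n$ could diverge.
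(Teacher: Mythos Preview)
Your proof is correct and follows essentially the same route as the paper: the layer-cake representation combined with the uniform Hoeffding--Azuma tail bound \eqref{eq:HA} to get $B(k)$, followed by the pointwise inequality $\Mv(n) \ge \sup_{0 \le m \le n}\norm{\thetavb(m)}$ and the Monotone Convergence Theorem. The paper adds an (unnecessary) reduction to integer $k$ and leaves the Gaussian moment integral unevaluated, whereas you compute it explicitly as $\tfrac{k}{2}C^{k/2}\Gamma(k/2)$; neither difference is substantive.
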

\begin{proof}

\vspace{.2cm}

It is sufficient to prove the statement for integer valued $k.$ Since, $\E{\Mv(n)^p} \le 1 + \E{\Mv(n)^k}$ for $k-1 < p < k$ and $k \ge 1$. Let us fix an arbitrary integer $k.$ First, we observe that $\{\Mv(n)\}_{n \ge 0}$ is a sequence of positive random variables, hence the $k^{th}$ moment can be rewritten as follows:
\begin{equation}
\E{\Mv(n)^k} = k \int \limits_0 ^\infty x^{k-1} \Prob{\Mv(n) \ge x} \ dx.
\end{equation}
Using the Hoeffding-Azuma inequality of \eqref{eq:HA},
\begin{equation} \label{eq:momentbound}
    \int \limits_0 ^\infty x^{k-1} P(\Mv(n) \ge x) \ dx \le \int \limits_0 ^\infty x^{k-1} \exp{\left( \frac{-x^2}{\lambda^2 \sum \limits_{m \ge 0} a(m)^2} \right)} \ dx .
\end{equation}
We observe that the right hand side of \eqref{eq:momentbound} is only dependent on the moment, $k$, being calculated. Further, we know that it is integrable. Hence we let $B(k) \equiv k \int \limits_0 ^\infty x^{k-1} \exp{\left( \frac{-x^2}{\lambda^2 \sum \limits_{m \ge 0} a(m)^2} \right)} \ dx$. Now, from the construction of the sub-martingale sequence, we get that $\Mv(n) \ge \sup \limits_{0 \le m \le n} \ \norm{\thetavb(m)} \ a.s.$ Hence, $\sup \limits_{n \ge 0} \ \E{(\sup \limits_{0 \le m \le n}\norm{\thetavb(m)})^k} \le B(k).$

It is now left to show that $\E{\sup \limits_{m \ge 0}\norm{\thetavb(m)}^k} \le B(k).$ We know that $\lim \limits_{n \uparrow \infty} (\sup \limits_{0 \le m \le n}\norm{\thetavb(m)})^k = (\sup \limits_{m \ge 0} \norm{\thetavb(m)})^k \ a.s.$ Using the Monotone Convergence Theorem we get
\begin{equation}
    \lim \limits_{n \uparrow \infty} \ \E{(\sup \limits_{0 \le m \le n}\norm{\thetavb(m)}) ^k} = \E{(\sup \limits_{m \ge 0} \norm{\thetavb(m)}) ^k} \le B(k).
\end{equation}

\end{proof}

We have thus shown that the output layer weights $\thetavb$ are numerically stable when they are updated using clipped gradients. This property is independent of the clipping methodology used. It is, in particular, only required that the output layer be updated at every step using bounded update-values. In the above analysis, the updates are bounded by $\lambda.$ In addition, we have also shown that the norm of $\thetavb$ is such that every moment of it is bounded over time. Further, this bound is independent of time. In particular, in Lemma~\ref{lem:vbvar} we showed that the output layer weights are updated such that their norm is always bounded in variance. An algorithm has a tendency to be unstable when the weights have high variance over time. In Lemma~\ref{lem:vbvar} we showed that the output layer weights are every-moment-bounded, hence the algorithm is numerically stable. Viewed differently, numerical instability is a consequence of unpredictable gradient magnitudes. Since we clip the gradients before updating the output layer, we control the magnitude and ensure stability, we showed this in Lemma~\ref{lem:vbstable}. 

\subsection{The input layer behavior}

Traditionally, when the output layer is updated using clipped gradients, so are the other layers. In this paper, $\thetawu$ represents the vector of weights from the input layer. When updated using clipped gradients, we have:
\begin{equation}
    \label{eq:wuclip}
    \thetawu(n+1) = \thetawu(n) - a(n) \frac{\gradwu{\thetawu(n), x(n), y(n)}}{\frac{\norm{\gradwu{\thetawu(n), x(n), y(n)}}}{\lambda} \vee 1}
\end{equation}
It is expected that statements analogous to Lemmas~\ref{lem:vbstable} and \ref{lem:vbvar} hold true. On the face of it, clipping seems to be necessary to curtail the possible exploding gradients issue. Having said that, the hidden and input layers are more susceptible to the vanishing gradients problem. We believe that clipping gradients may amplify this issue. Further, in this section, we show that clipping gradients is unnecessary for the input and hidden layer updates. In particular, we show - when the output layer is numerically stable, so are the other layers, provided we use twice (or more) continuously differentiable squashing activations. Hence, the input layer parameter $\thetawu$ is updated as follows:
\begin{equation} \label{eq:wu}
    \thetawu(n+1) = \thetawu(n) - a(n) \gradwu{\thetawu(n), x(n), y(n)}
\end{equation}

\begin{lem}
    \label{lem:wustable}
    If $\sup \limits_{m \ge 0} \norm{\thetavb(m)} < \infty \ a.s.,$ then $\sup \limits_{m \ge 0} \norm{\thetawu(m)} < \infty \ a.s.$
\end{lem}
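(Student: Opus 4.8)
The plan is to show that the input layer inherits its stability from the already-stabilized output layer, exploiting the one structural fact supplied by Assumption~\ref{asmp:nnact}: by Lemma~\ref{lem:wubound}(iii) (and its analogues, Lemmas~\ref{lem:wubound2} and \ref{lem:wubound3}), $\norm{\gradwu{\nnw(n),\x(n),y(n)}} \le K(1+\norm{\thetavb(n)}^2)$, with \emph{no} dependence on $\norm{\thetawu}$. This is exactly what the squashing assumption buys, since the activation derivatives $\sigmag_k$ appearing in \eqref{eq:reggrad} are bounded; it means that along any sample path on which the output weights are bounded, the unclipped update \eqref{eq:wu} is driven by a forcing term controlled entirely by $\thetavb$, with no self-feedback. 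Writing $C := \sup_{m \ge 0}\norm{\thetavb(m)}$, finite a.s. by hypothesis (equivalently, by Lemma~\ref{lem:vbstable}), the per-step increment obeys $\norm{\thetawu(n+1)-\thetawu(n)} = a(n)\norm{\gradwu{\nnw(n),\x(n),y(n)}} \le a(n)K(1+C^2)$.

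First I would introduce the path-length process $\Mwu(n) \equiv \norm{\thetawu(0)} + \sum_{m=0}^{n-1} a(m)\norm{\gradwu{\nnw(m),\x(m),y(m)}}$, mirroring the process $\Mv$ used for the output layer. By the triangle inequality $\sup_{0 \le m \le n}\norm{\thetawu(m)} \le \Mwu(n)$, and since $\Mwu$ is non-decreasing it suffices to show $\lim_n \Mwu(n) < \infty$ a.s. Note that naive telescoping is useless here: the crude bound $\Mwu(n) \le \norm{\thetawu(0)} + K(1+C^2)\sum_{m<n} a(m)$ diverges because $\sum_m a(m) = \infty$ by Assumption~\ref{asmp:steps}. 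One must therefore reproduce the submartingale/Hoeffding--Azuma concentration argument of Lemma~\ref{lem:vbstable}, with the fixed clipping constant $\lambda$ there replaced by the per-step bound $K(1+C^2)$ here.

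The main obstacle is precisely that this replacement bound is \emph{random} and, through $C = \sup_m\norm{\thetavb(m)}$, not $\F(n)$-measurable, so it cannot play the role of the deterministic constant that made the output-layer argument go through. To handle this I would localize: fix $L < \infty$, restrict to $\Omega_L \equiv \{C \le L\}$, and work with the \emph{capped} increments $a(m)\big(\norm{\gradwu{\nnw(m),\x(m),y(m)}} \wedge K(1+L^2)\big)$, which are $\F(\cdot)$-adapted, agree with the true increments on $\Omega_L$, and have square-summable squared magnitudes since $\sum_m a(m)^2 < \infty$. On the associated adapted process the concentration estimate of Lemma~\ref{lem:vbstable} applies verbatim (with $\lambda$ replaced by $K(1+L^2)$), yielding a finite limit; since this process coincides with $\Mwu$ on $\Omega_L$, we get $\lim_n\Mwu(n) < \infty$ on $\Omega_L$.

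Finally, because $P(C < \infty) = 1$ we have $P\big(\bigcup_{L \in \mathbb{N}} \Omega_L\big) = 1$, so $\lim_n \Mwu(n) < \infty$ a.s. and hence $\sup_{m \ge 0}\norm{\thetawu(m)} < \infty$ a.s. The delicate bookkeeping I expect to spend the most care on is the localization step: verifying that the capping produces a genuinely adapted process whose concentration bound is uniform in $n$, and that the null sets discarded over the countable family $\{\Omega_L\}_{L \in \mathbb{N}}$ still have total measure zero. The conceptual content, however, is entirely carried by the squashing assumption, which is what decouples the input-layer forcing from $\norm{\thetawu}$ and lets the stability of $\thetavb$ propagate inward.
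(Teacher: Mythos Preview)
Your proposal takes essentially the same route as the paper: both define the path-length process dominating $\sup_{m\le n}\norm{\thetawu(m)}$, localize on the events $\Omega_L=\{\sup_{m\ge 0}\norm{\thetavb(m)}\le L\}$, rerun the Hoeffding--Azuma concentration from Lemma~\ref{lem:vbstable} with the clipping constant $\lambda$ replaced by $K(1+L^2)$, and then exhaust over $L\uparrow\infty$. The only cosmetic difference is that the paper conditions directly on $\Omega_L$ and invokes a ``conditional Hoeffding--Azuma'', whereas you cap the increments at $K(1+L^2)$ to manufacture an adapted process before applying the bound; your version is a bit more careful about the measurability issue you flagged, but the substance is identical.
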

\begin{proof}

    \vspace{.2cm}

Let us start by defining an appropriate sub-martingale sequence: $\Mw(0) \equiv 0,$ $\Mw(n) \equiv \lVert \theta^{\W \ub}(0) \rVert + \sum \limits_{m=0}^{n-1} a(k) K \left(1 + \lVert \theta^{\vb b}(m)\rVert^2 \right),$ and the associated natural filtration $\Fw(0) \equiv \{\Phi, \Omega\}$ and $\Fw(n) \equiv \sigma \langle \theta(m), m < n \rangle.$ We first note that $\{\Mw(n)\}_{n \ge 0}$ is an almost surely increasing sequence of positive-valued random variables, and that $\Mw(n+1) - \Mw(n) = \lvert \Mw(n+1) - \Mw(n) \rvert = a(n) K(1 + \lVert \theta^{\vb b}(n)\rVert^2)$ for $n \ge 0.$ Also, from Lemma~\ref{lem:wubound}, we can conclude that $\Mw(n) \ge \sup \limits_{0 \le m \le n}\lVert \theta^{\W \ub}(m) \rVert \ a.s.$ The event $\left\{ \sup \limits_{m \ge 0} \norm{\thetavb(m)} < \infty \right\} = \underset{C \uparrow \infty}{\bigcup} \left\{ \sup \limits_{m \ge 0} \norm{\thetavb(m)} \le C \right\}.$ For $0 < c_1 \le c_2 < \infty,$ $\left\{ \sup \limits_{m \ge 0} \norm{\thetavb(m)} \le c_1 \right\} \subseteq \left\{ \sup \limits_{m \ge 0} \norm{\thetavb(m)} \le c_2 \right\}$, hence $\mathds{1}\left(  \sup \limits_{m \ge 0} \norm{\thetavb(m)} \le c_1  \right) \le \mathds{1} \left(  \sup \limits_{m \ge 0} \norm{\thetavb(m)} \le c_2 \right) \ a.s.,$ where $\mathds{1}(\cdotp)$ represents the indicator random variable. Hence, we can use Monotone Convergence Theorem to conclude that $\lim \limits_{C \uparrow \infty} \Prob{\set{\sup \limits_{m \ge 0} \norm{\thetavb(m)} \le C} } = \Prob{\set{\sup \limits_{m \ge 0} \norm{\thetavb(m) }< \infty} } = 1.$

Fix an arbitrary $0 < C < \infty,$ then we can use the following conditional version of the Hoeffding-Azuma Inequality, where we condition on the event $\set{\sup \limits_{m \ge 0} \norm{\thetavb(m)} \le C}$:

\begin{equation} 
    \begin{split}
      \Prob{\abs{\Mw(n) - \Mw(0)} \ge C^3 \left\lvert \sup \limits_{m \ge 0} \norm{\thetavb(m)} \right. \le C} &\le \exp \left(\frac{-C^6}{\sum \limits_{m=0}^{n-1} a(m)^2 K^2(1 + \lVert \theta^{\vb b}(n)\rVert^2)^2}\right) \\ &\le \exp \left(\frac{-C^6}{\sum \limits_{m=0}^{\infty} a(m)^2 K^2(1 + C^2)^2}\right).  
    \end{split}
\end{equation}
Since $\Mw(0) = 0,$ and $\Mw(n)$ is positive \textit{a.s.} for all $n,$ we have
\begin{equation} \label{eq:martingale1}
    \begin{split}
      \Prob{\Mw(n) \ge C^3  \left\lvert \sup \limits_{m \ge 0} \norm{\thetavb(m)} \right. \le C} \le \exp \left(\frac{-C^6}{\sum \limits_{m=0}^{\infty} a(m)^2 K^2(1 + C^2)^2}\right).  
    \end{split}
\end{equation}
If we let $C \uparrow \infty$ in \eqref{eq:martingale1}, then the left-hand side becomes

\begin{equation} \label{eq:martingale2}
 \lim \limits_{C \uparrow \infty} \Prob{ \Mw(n) \ge C^3 \left\lvert \sup \limits_{m \ge 0} \norm{\thetavb(m)} \right. \le C} = \Prob{\Mw(n) = \infty \left\lvert \sup \limits_{m \ge 0} \norm{\thetavb(m)} \right. < \infty},
\end{equation}
the right-hand side is such that
\begin{equation} \label{eq:martingale3}
  \lim \limits_{C \uparrow \infty} \ \exp \left(\frac{-C^6}{\sum \limits_{m=0}^{\infty} a(m)^2 K^2(1 + C^2)^2}\right) = 0.
\end{equation}
Putting equations \eqref{eq:martingale2} and \eqref{eq:martingale3} together we get:
\begin{equation}
  \Prob{ \Mw(n) = \infty \left\lvert \sup \limits_{m \ge 0} \norm{\thetavb(m)} \right. < \infty} = 0.
\end{equation}
If we consider the complementary event, and utilize that $\M(n) \ge \sup \limits_{0 \le m \le n}\lVert \theta^{\W \ub}(m) \rVert \ a.s.$ and that $\Prob{\sup \limits_{m \ge 0} \norm{\thetavb(m)}  < \infty} = 1,$ we get
$
  \Prob{ \sup \limits_{0 \le m \le n}\lVert \theta^{\W \ub}(m) \rVert < \infty} = 1, \ \forall \ n \ge 0.
$
Now, we note that almost surely $\ind{ \sup \limits_{0 \le m \le n}\lVert \theta^{\W \ub}(m) \rVert < \infty} \big{\downarrow} \ind{ \sup \limits_{m \ge 0}\lVert \theta^{\W \ub}(m) \rVert < \infty }$ as $n \uparrow \infty.$ Finally, we use the Dominated Convergence Theorem to conclude that $\Prob{ \sup \limits_{m \ge 0} \ \lVert \theta^{\W \ub}(m) \rVert < \infty} = 1.$

\end{proof}

In the above lemma, we showed that the input layer weights are automatically stable when the output layer weights are updated in a stable manner. Although we considered a shallow network in our analysis, a similar stability analysis will go through for deep neural networks (DNNs). For DNNs, we can show the following: the `current' layer weights are stable provided all the `successive' layer weights are updated in a stable manner. Again, once the output layer is stabilized, the remaining layers can be shown to be automatically stable through induction, starting from the output layer and moving back to the input layer, one layer at a time.

To prove Lemma~\ref{lem:wustable}, we did not use Lemma~\ref{lem:vbvar}. We only used the stability of the output layer. Suppose the statement of Lemma~\ref{lem:vbvar} holds, then we show, in the following Lemma, that the input layer weights are also every-moment bounded. However, unlike the bound we obtained before, for the input layer we get a bound that depends on $n$.

\begin{lem}
\label{lem:wuvar}
If $\E{(\sup \limits_{0 \le m \le n} \norm{\thetavb(m)})^k} < B(k,n),$ then $\E{(\sup \limits_{0 \le m \le n} \norm{\thetawu(m)})^k} < B'(k,n),$ where $k \ge 1,$ $B(k,n) < \infty$ and $B'(k,n)< \infty.$
\end{lem}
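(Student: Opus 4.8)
The plan is to bound the running supremum of $\norm{\thetawu(m)}$ pathwise by the submartingale $\Mw(n)$ introduced in the proof of Lemma~\ref{lem:wustable}, and only then take $k$-th moments. First I would unfold the update rule \eqref{eq:wu}: since $\norm{\thetawu(n+1)} \le \norm{\thetawu(n)} + a(n)\norm{\gradwu{\thetawu(n),x(n),y(n)}}$, the gradient bound of Lemma~\ref{lem:wubound}(iii) (and its analogues for the other losses), namely $\norm{\nabla_{wu}\ell(\nnw,\x,y)} \le K(1 + \norm{\thetavb}^2)$, yields by telescoping $\norm{\thetawu(n)} \le \norm{\thetawu(0)} + \sum_{m=0}^{n-1} a(m) K(1 + \norm{\thetavb(m)}^2)$. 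The right-hand side is precisely $\Mw(n)$, so $\sup_{0 \le m \le n}\norm{\thetawu(m)} \le \Mw(n)$ a.s., and it suffices to bound $\E{\Mw(n)^k}$.

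Next I would crudely upper-bound $\Mw(n)$ by pulling out the worst term. Writing $A_n \equiv \sum_{m=0}^{n-1} a(m)$, which is finite for each fixed $n$ even though $\sum_m a(m) = \infty$, and $S_n \equiv \sup_{0 \le m \le n}\norm{\thetavb(m)}$, I would replace each $\norm{\thetavb(m)}^2$ by $S_n^2$ and use Assumption~\ref{asmp:initial} to bound $\norm{\thetawu(0)} \le \lambda$. This gives $\Mw(n) \le (\lambda + K A_n) + K A_n S_n^2 =: C_1(n) + C_2(n) S_n^2$, where $C_1(n)$ and $C_2(n)$ are deterministic and finite but grow with $n$ through $A_n$.

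Finally I would raise to the $k$-th power via $(a+b)^k \le 2^{k-1}(a^k + b^k)$ and take expectations, obtaining $\E{\Mw(n)^k} \le 2^{k-1}\bigl(C_1(n)^k + C_2(n)^k \E{S_n^{2k}}\bigr)$. The premise supplies $\E{S_n^{2k}} < B(2k,n) < \infty$; invoking it at $2k$ rather than $k$ is permissible because the hypothesis (which traces back to Lemma~\ref{lem:vbvar}) is assumed at every moment $k \ge 1$. Setting $B'(k,n) \equiv 2^{k-1}\bigl(C_1(n)^k + C_2(n)^k B(2k,n)\bigr)$ then closes the argument.

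The one genuine subtlety is the squared norm in the gradient bound, which forces the $2k$-th moment of the output layer into play rather than the $k$-th; one must check that the available control on $\thetavb$ is strong enough, and it is, since Lemma~\ref{lem:vbvar} provides every-moment boundedness uniformly in time. Everything else is routine. I would also stress why the $n$-dependence is unavoidable: it enters solely through $A_n$, which is finite for each fixed $n$ but generally diverges as $n \uparrow \infty$. This is the structural reason that, in contrast to the time-uniform bound $B(k)$ for the clipped output layer, the unclipped input layer can only be guaranteed an $n$-dependent bound $B'(k,n)$.
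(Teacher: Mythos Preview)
Your proposal is correct and follows essentially the same approach as the paper: bound $\sup_{0\le m\le n}\norm{\thetawu(m)}$ pathwise by $\Mw(n)$, then control $\E{\Mw(n)^k}$ using a convexity inequality together with Assumption~\ref{asmp:initial} and the hypothesis applied at order $2k$. The only cosmetic difference is that the paper uses the $n$-term power-mean inequality $\bigl(\sum_{i=1}^n x_i\bigr)^k \le n^{k-1}\sum_{i=1}^n x_i^k$ directly on the sum defining $\Mw(n)$, whereas you first collapse the sum via $\norm{\thetavb(m)}^2 \le S_n^2$ and then apply the two-term version; both routes lead to the same invocation of $B(2k,n)$.
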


\begin{proof}

    \vspace{.2cm}

Recall the previously defined sub-martingale: $\Mw(0) \equiv 0$ and $\Mw(n) \equiv \lVert \theta^{\W \ub}(0) \rVert + \sum \limits_{m=0}^{n-1} a(m) K \left(1 + \lVert \theta^{\vb b}(m)\rVert^2 \right),$ and the natural filtration, $\Fw(0) \equiv \{\Phi, \Omega\}$ and $\Fw(n) \equiv \sigma \left\langle \theta(m), m < n \right\rangle,$ for $n \ge 1.$ Let us fix arbitrary integers $k,n \ge 1.$ Since, $k \ge 1,$ one can show that
\begin{equation}
\label{eq:wuvar1}
\E{(\Mw(n))^k} \le n^{k-1} \E{ \lVert \theta^{\W \ub}(0) \rVert^k + \sum \limits_{m=0}^{n-1} a(m)^k K^k \left(1 + \lVert \theta^{\vb b}(m)\rVert^2\right)^k}.
\end{equation}
Now, we use Assumption~\ref{asmp:initial} and $\E{(\sup \limits_{0 \le m \le n} \norm{\thetavb(m)})^{2k}} < B(2k,n)$ to bound the RHS of \eqref{eq:wuvar1} by some constant $B'(k,n) < \infty.$ The statement of the Lemma directly follows from the definition of the sub-martingale - $\Mw(n) \ge \sup \limits_{0 \le m \le n} \norm{\thetawu}(m) \ a.s.$
    
\end{proof}

As in the case of Lemma~\ref{lem:wustable}, we can modify Lemma~\ref{lem:wuvar} to account for DNNs as well. We can show the following: at any time-step suppose the weight vectors associated with each successive layer are every-moment bounded, then the weight vector of the current layer is also every-moment bounded.

When proving Lemma~\ref{lem:wuvar} we also showed that the sub-martingale sequence is square integrable, \textit{i.e.}, $\E{\Mw(m) ^2}< \infty$ for every $m \ge 0.$ For the square integrable sub-martingale, we can define the following quadratic variation process:
 $\angles{\Mw(n)} \equiv \sum \limits_{m=0}^n a(m)^2 K^2 \left(1 + \lVert \theta^{\vb b}(m)\rVert^2 \right)^2,$ $n \ge 2.$ When the output layer is stable, \textit{i.e.,} $\sup \limits_{n \ge 0} \norm{\thetavb(n)} < \infty \ a.s.,$ we get that $\lim \limits_{n \to \infty} \angles{\Mw(n)} = \sum \limits_{m=0}^n a(m)^2 K^2 \left(1 + \lVert \theta^{\vb b}(m)\rVert^2 \right)^2 < \infty \ a.s.$ (we rely here on the square summability of the step-size sequence as well). We know that $\lim \limits_{n \to \infty} \Mw(n)$ exists whenever $\lim \limits_{n \to \infty} \angles{\Mw(n)} < \infty \ a.s.$ Since our sub-martingale is an increasing sequence of positive random variables, we get that $\sup \limits_{n \ge 0} \Mw(n) < \infty \ a.s.$ and that $\sup \limits_{n \ge 0} \norm{\thetawu(n)} < \infty \ a.s.$

\begin{thm} \label{thm:stability}
 Under Assumptions~\ref{asmp:nnact}-\ref{asmp:steps}, the algorithms \eqref{eq:regressionupdate}, \eqref{eq:classificationupdate} and \eqref{eq:dqnupdate} are stable, \textit{i.e.,} $\sup \limits_{n \ge 0} \norm{\nnw(n)} < \infty \ a.s$. Further,
 $\E{(\sup \limits_{n \ge 0}\norm{\thetavb(n)})^k} \le B(k)$ for some $B(k) < \infty$ that is dependent on $k \ge 1$ alone, and $\E{(\sup \limits_{0 \le m \le n} \norm{\thetawu(m)})^k} < B'(k,n),$ where $k \ge 1$ and $B'(k,n)< \infty.$
\end{thm}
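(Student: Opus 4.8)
The plan is to assemble Theorem~\ref{thm:stability} from the output-layer and input-layer lemmas already established, after first checking that each of the three concrete schemes is an instance of, or reduces to, the generic iteration \eqref{eq:genericupdate}. For regression, \eqref{eq:regressionupdate} is literally \eqref{eq:genericupdate} with $\ell \equiv \ell_r$, so nothing is needed. For classification, Lemma~\ref{lem:wubound2} gives $\norm{\nabla_{vb}\ell_c(\nnw,\x,y)} \le K$; choosing the clipping constant $\lambda \equiv K+1$ renders the clipping condition in \eqref{eq:genericupdate} permanently inactive, so \eqref{eq:classificationupdate} coincides with \eqref{eq:genericupdate} with $\ell \equiv \ell_c$. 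For Deep Q-Learning, \eqref{eq:dqnupdate} differs only through the mini-batch average: each summand of the output-layer update is norm-bounded by $\lambda$, so the average $\frac{1}{K}\sum_k(\cdot)$ is too, and each summand of the input-layer update obeys the bound of Lemma~\ref{lem:wubound3}, hence so does its average. Thus the only structural facts used below --- a $\lambda$-bounded output-layer increment and an input-layer increment bounded by $K(1+\norm{\thetavb}^2)$ --- hold verbatim with $\ell \equiv \ell_b$.

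With this reduction in place, I would first dispatch the output layer. Lemma~\ref{lem:vbstable} gives $\sup\limits_{m \ge 0}\norm{\thetavb(m)} < \infty$ a.s., and Lemma~\ref{lem:vbvar} gives $\E{(\sup\limits_{n \ge 0}\norm{\thetavb(n)})^k} \le B(k)$ with $B(k)$ depending on $k$ alone. This already delivers the second displayed claim of the theorem.

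Next I would transfer these conclusions to the input layer and then combine. Conditioning on the almost-sure event $\{\sup\limits_{m\ge 0}\norm{\thetavb(m)} < \infty\}$ from the previous step, Lemma~\ref{lem:wustable} yields $\sup\limits_{m \ge 0}\norm{\thetawu(m)} < \infty$ a.s. For the moments, the bound from Lemma~\ref{lem:vbvar} is in particular finite for each fixed $n$, i.e. $\E{(\sup\limits_{0 \le m \le n}\norm{\thetavb(m)})^{k}} \le B(k) < \infty$ (and likewise at order $2k$, as the proof of Lemma~\ref{lem:wuvar} requires), which is precisely the hypothesis of Lemma~\ref{lem:wuvar}; applying it gives $\E{(\sup\limits_{0 \le m \le n}\norm{\thetawu(m)})^k} < B'(k,n) < \infty$, the third displayed claim. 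Finally, since $\nnw(n) \equiv (\thetavb(n), \thetawu(n))$ we have $\norm{\nnw(n)} \le \norm{\thetavb(n)} + \norm{\thetawu(n)}$, so taking the supremum over $n$ and using the two almost-sure bounds gives $\sup\limits_{n \ge 0}\norm{\nnw(n)} < \infty$ a.s., the stability assertion.

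The argument is essentially bookkeeping that chains Lemmas~\ref{lem:vbstable}--\ref{lem:wuvar}, and I expect no substantial obstacle. The only point requiring genuine care --- and the step I would treat most carefully --- is the reduction in the first paragraph, specifically verifying that the mini-batch averaging in \eqref{eq:dqnupdate} preserves both the $\lambda$-bound on the clipped output-layer increment and the Lemma~\ref{lem:wubound3} bound on the input-layer increment, so that the submartingale constructions $\{\Mv(n)\}$ and $\{\Mw(n)\}$ underlying the four lemmas go through unchanged.
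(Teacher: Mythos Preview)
Your proposal is correct and follows essentially the same approach as the paper: the proof simply chains Lemmas~\ref{lem:vbstable} and \ref{lem:wustable} for stability, and Lemmas~\ref{lem:vbvar} and \ref{lem:wuvar} for the moment bounds. The only difference is one of packaging: the reduction of the three concrete schemes to the generic iteration \eqref{eq:genericupdate} (including the observation about mini-batches) is handled by the paper in the expository text at the start of Section~\ref{sec:algo} rather than inside the proof, so the paper's proof is even terser than yours.
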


\begin{proof}

\vspace{.2mm}

In Lemma~\ref{lem:vbstable} we showed that $\sup \limits_{n \ge 0} \norm{\thetavb(n)} < \infty$ a.s. Then, in Lemma~\ref{lem:wustable} we showed that $\sup \limits_{n \ge 0} \norm{\thetawu(n)} < \infty$ a.s. whenever $\sup \limits_{n \ge 0} \norm{\thetavb(n)} < \infty$ a.s. Putting them together gives us the required stability result - $\sup \limits_{n \ge 0} \norm{\nnw(n)} < \infty \ a.s$.

The results with respect to the moments are proven in Lemmas~\ref{lem:vbvar} and \ref{lem:wuvar}.

\end{proof}

One key symptom of unreliable learning is the high variance encountered when plotting the learning curve, whose variance is a direct function of the variance of the neural network weight updates. In the above theorem, we show that learning is reliable and is independent of stochastic quantities such as the random seed. It does not, however, suggest good performance. Additional factors such as the size of the neural network and learning rate play an important role in ensuring good performance. Low variance is still key to good performance. \textit{The theorem statement can be used to conclude that every moment is bounded, indicating that performance is consistent, a property that is lacking in current deep learning methods.}

\section{Convergence Analysis} \label{sec:convergence}
In this section, we present the convergence analysis of the regression, classification and Deep Q-Learning algorithms. Generally speaking, we will utilize the tools from the theory of stochastic approximation algorithms for analyses, see \cite{borkar2008stochastic, harold1997stochastic}. Analyses for the supervised learning algorithms - regression and classification - are very similar. In order to avoid redundancy, we will provide a detailed analysis for regression, then provide the points of deviation for classification. Also, the analysis is based on the theory developed in \cite{benaim2005stochastic}. For Deep Q-Learning, we will base our convergence analysis on theory developed in \cite{borkar2006stochastic} and \cite{ramaswamy2021deep}.
\subsection{Regression} \label{sec:regressionconvergence}
We first rewrite \eqref{eq:regressionupdate} in order to apply the theory developed in \cite{benaim2005stochastic}.
\begin{equation}
    \label{eq:regressionupdaterewrite}
    \begin{split}
       \thetavb(n+1) &= \thetavb(n) - a(n) \left( \Exp{\frac{\gradvb{\nnw(n), \x, y}}{\nicefrac{\norm{\gradvb{\nnw(n), \x, y}}}{\lambda} \vee 1}} + \Mvb(n+1) \right),\\ 
       \thetawu(n+1) &= \thetawu(n) - a(n) \left(\Exp{\gradwu{\nnw(n), \x, y}} + \Mwu(n+1)\right),
    \end{split}   
\end{equation}

where $\mu$ is the data distribution, \textit{i.e.}, the probability distribution that generated the dataset used to train the regression network; $\Mvb(n+1) \equiv 
\frac{\gradvb{\nnw(n), \x, y}}{\nicefrac{\norm{\gradvb{\nnw(n), \x, y}}}{\lambda} \vee 1} - \Exp{\frac{\gradvb{\nnw(n), \x, y}}{\nicefrac{\norm{\gradvb{\nnw(n), \x, y}}}{\lambda} \vee 1}}$ and $\Mwu(n+1) \equiv 
\gradwu{\nnw(n), \x, y} - \Exp{\gradwu{\nnw(n), \x, y}}.$ Note also that we have omitted the $``r''$ subscript from the loss function. We do this to reduce clutter and because we want to use large parts of the analysis for the classification setting as well. In \cite{benaim2005stochastic}, algorithms such as \eqref{eq:regressionupdaterewrite} were studied for the general case where the objective function is a point-to-set map. Our objective function - $\nnw \mapsto \begin{pmatrix}\Exp{\frac{\gradvb{\nnw, \x, y}}{\nicefrac{\norm{\gradvb{\nnw, \x, y}}}{\lambda} \vee 1}} \\
\Exp{\gradwu{\nnw, \x, y}}
\end{pmatrix}$ - is trivially a point-to-set map where the function value is always the singleton $\set{\begin{pmatrix}\Exp{\frac{\gradvb{\nnw, \x, y}}{\nicefrac{\norm{\gradvb{\nnw, \x, y}}}{\lambda} \vee 1}} \\
\Exp{\gradwu{\nnw, \x, y}}
\end{pmatrix}}$. In order to apply the theory from \cite{benaim2005stochastic}, we need to show the following:

\vspace{.1cm}

\begin{fact} \label{factobj}
Almost surely, $\nnw \mapsto \set{\begin{pmatrix} \Exp{\frac{\gradvb{\nnw, \x, y}}{\nicefrac{\norm{\gradvb{\nnw, \x, y}}}{\lambda} \vee 1}} \\
\Exp{\gradwu{\nnw, \x, y}}
\end{pmatrix}}$ is a Marchaud map for $\nnw \in \overline{B}_{K'}(\bm{0}),$ where $K' \equiv \sup \limits_{n \ge 0} \norm{\nnw(n)}$ is a sample path dependent finite real number, and $\overline{B}_{K'}(\bm{0})$ represents the sphere of radius $K'$ centered at the origin.
\end{fact}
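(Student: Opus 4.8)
The plan is to verify directly the three defining properties of a Marchaud map, taking advantage of the fact that our objective map is single-valued. Write $H(\nnw)$ for the singleton consisting of the pair of expectation blocks in the statement. Since $H(\nnw)$ is a singleton for every $\nnw$, property (1) — convexity and compactness of $H(\nnw)$ — holds trivially, with no computation required.

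For the linear-growth bound (property 2) I would restrict to $\nnw \in \overline{B}_{K'}(\bm{0})$ and bound the two blocks separately. The output-layer block is the expectation of the \emph{clipped} gradient, whose norm is at most $\lambda$ pointwise by construction, so its expectation also has norm at most $\lambda$. The inner-layer block is $\Exp{\gradwu{\nnw, \x, y}}$, and Lemma~\ref{lem:wubound}(iii) bounds the integrand by $K(1 + \norm{\thetavb}^2) \le K(1 + K'^2)$ on the ball. Hence $\sup_{z \in H(\nnw)} \norm{z} \le \lambda + K(1 + K'^2)$, a finite constant $M$; since $1 + \norm{\nnw} \ge 1$, this is of the required form $M(1 + \norm{\nnw})$.

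The substantive step is upper semicontinuity (property 3). Because $H$ is single-valued, upper semicontinuity is equivalent to ordinary continuity of the map $\nnw \mapsto H(\nnw)$, so I would fix a sequence $\nnw(k) \to \nnw$ in the ball and show both expectation blocks converge, passing the limit through the expectation via bounded convergence. Two observations drive the interchange. First, under Assumption~\ref{asmp:nnact} the activations are twice continuously differentiable, so for each fixed $(\x, y)$ every component of $\gradwu{\cdot, \x, y}$ and $\gradvb{\cdot, \x, y}$ is continuous in $\nnw$. Second, the norm-clipping map $z \mapsto z / (\norm{z}/\lambda \vee 1)$ is continuous (indeed globally Lipschitz), including across the threshold $\norm{z} = \lambda$, so the clipped integrand is continuous in $\nnw$ as well. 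On the compact ball $\overline{B}_{K'}(\bm{0})$ and over the compact data domain $\Xit \times \Yit$ (Assumption~\ref{asmp:XYcompact}) these integrands are uniformly bounded by a constant; the Bounded Convergence Theorem then lets me exchange limit and expectation, giving continuity of both blocks and hence of $H$.

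The main obstacle I anticipate is the continuity of the expected \emph{clipped} gradient: one must confirm that the clipping nonlinearity introduces no discontinuity — it does not, since the map is continuous across $\norm{z} = \lambda$ — and that the domination required for the limit exchange is genuinely uniform in $(\x, y)$, which is precisely where compactness of $\Xit$, $\Yit$ and of $\overline{B}_{K'}(\bm{0})$ is essential. Measurability and integrability of the integrands are routine once these uniform bounds are in hand. Finally I would remark that $K' = \sup_{n \ge 0} \norm{\nnw(n)}$ is finite almost surely by Theorem~\ref{thm:stability}, which justifies restricting the analysis to the sample-path-dependent ball and gives meaning to the ``almost surely'' qualifier in the statement.
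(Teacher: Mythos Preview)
Your proposal is correct and follows essentially the same route as the paper: singletons are trivially convex and compact, the linear-growth bound comes from Lemma~\ref{lem:wubound} (the paper also points to Facts~\ref{factvb}--\ref{factwu}), and upper semicontinuity reduces to ordinary continuity, established by pointwise continuity of the integrand (the paper phrases this as ``the max operator is continuous'' where you invoke continuity of the clipping map) together with a dominated/bounded convergence argument using the uniform bound $K(1+\hat{K}^2)$ on the ball.
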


\vspace{.1cm}

\begin{fact} \label{factvb}
Almost surely, $\norm{\Exp{\frac{\gradvb{\nnw(n), \x, y}}{\nicefrac{\norm{\gradvb{\nnw(n), \x, y}}}{\lambda} \vee 1}}} \le KK'(1 + \norm{\nnw(n)})$ for all $n \ge 0,$ where $K' < \infty$ is a sample path dependent constant, further, $K$ is from Lemma~\ref{lem:wubound}.
\end{fact}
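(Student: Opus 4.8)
The plan is to prove Fact~\ref{factvb} by a short chain of inequalities, using that norm-based clipping can only shrink a vector and then invoking the linear-growth bound of Lemma~\ref{lem:wubound}. Write $g(\nnw, \x, y)$ for the clipped output-layer gradient appearing inside the expectation. I would first record the elementary fact that the clipping map is norm-non-increasing: its denominator $\nicefrac{\norm{\gradvb{\nnw, \x, y}}}{\lambda} \vee 1$ is at least $1$, so $\norm{g(\nnw, \x, y)} = \norm{\gradvb{\nnw, \x, y}} \wedge \lambda \le \norm{\gradvb{\nnw, \x, y}}$, and in particular $\norm{g(\nnw, \x, y)} \le \lambda$. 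The uniform bound by $\lambda$ guarantees integrability, so the expectation under $\mu$ is well defined.

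Next I would move the norm inside the expectation via the vector Jensen inequality $\norm{\Exp{g(\nnw(n), \x, y)}} \le \Exp{\norm{g(\nnw(n), \x, y)}}$, apply the pathwise estimate of the previous step, and then use Lemma~\ref{lem:wubound}(ii), which gives $\norm{\gradvb{\nnw(n), \x, y}} \le K(1 + \norm{\thetavb(n)})$ uniformly over $\x \in \Xit$ and $y \in \Yit$. Since $\thetavb$ is a subvector of $\nnw$, we have $\norm{\thetavb(n)} \le \norm{\nnw(n)}$, so this chain yields $\norm{\Exp{g(\nnw(n), \x, y)}} \le K(1 + \norm{\nnw(n)})$ for every realized iterate $\nnw(n)$ and every $n \ge 0$.

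Finally, to match the stated form I would appeal to Theorem~\ref{thm:stability}, which guarantees that $K' \equiv \sup_{n \ge 0} \norm{\nnw(n)}$ is finite almost surely; on that full-measure event, enlarging the constant by a factor $K' \vee 1 \ge 1$ turns the bound into $KK'(1 + \norm{\nnw(n)})$ for all $n$. I do not expect a genuine obstacle here, as this is a routine estimate, but two points deserve care: (i) verifying that the clipping denominator is bounded below by $1$, which is exactly what makes clipping norm-non-increasing and lets me reuse the unclipped bound of Lemma~\ref{lem:wubound}; and (ii) recognizing that the phrase \emph{almost surely} and the sample-path constant $K'$ enter only through the a.s.\ finiteness of $\sup_{n \ge 0} \norm{\nnw(n)}$ supplied by stability, not through any property of the expectation, which in fact already obeys the cleaner pathwise bound with constant $K$.
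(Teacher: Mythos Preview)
Your argument is correct. It differs in emphasis from the paper's, so a short comparison is in order.

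The paper treats Facts~\ref{factvb} and \ref{factwu} simultaneously: it defines the stacked vector $g(\nnw,\x,y)$ containing both the clipped $\thetavb$-gradient and the unclipped $\thetawu$-gradient, invokes the \emph{quadratic} bound $\norm{g(\nnw,\x,y)} \le K(1+\norm{\nnw}^2)$ coming from Lemma~\ref{lem:wubound}(iii), and then linearizes by writing $\norm{\nnw(n)}^2 \le \norm{\nnw(n)}\cdot\sup_{m\ge 0}\norm{\nnw(m)} = K'\norm{\nnw(n)}$, which is where the sample-path constant $K'$ genuinely enters.

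You instead isolate the $\thetavb$-component, observe that clipping is norm-non-increasing, and apply the \emph{linear} bound of Lemma~\ref{lem:wubound}(ii) directly to get $K(1+\norm{\nnw(n)})$ without any need for $K'$; you then pad by $K'\vee 1$ only to match the stated form. This is cleaner for Fact~\ref{factvb} in isolation and makes explicit that the factor $K'$ is superfluous here. The paper's route, by contrast, is tailored so that the same one-line estimate also delivers Fact~\ref{factwu}, where the quadratic growth of $\nabla_{wu}\ell$ makes the $K'$-linearization unavoidable. Both approaches are valid; yours is more transparent for this fact alone, the paper's is more economical across the two facts together.
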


\vspace{.1cm}

\begin{fact} \label{factwu}
Almost surely, $\norm{\Exp{\gradwu{\nnw(n), \x, y}}} \le KK'(1 + \norm{\nnw(n)})$ for all $n \ge 0,$ where $K' < \infty$ is a sample path dependent constant, and $K$ is from Lemma~\ref{lem:wubound}.
\end{fact}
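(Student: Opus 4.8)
The plan is to mirror the argument one would use for Fact~\ref{factvb}, reducing the claim to the gradient bound of Lemma~\ref{lem:wubound} together with the almost-sure stability established in Theorem~\ref{thm:stability}. First I would pull the norm inside the expectation by Jensen's inequality (equivalently, the triangle inequality for integrals):
$$\norm{\Exp{\gradwu{\nnw(n), \x, y}}} \le \Exp{\norm{\gradwu{\nnw(n), \x, y}}}.$$
Since the expectation is taken only over the fresh data sample $(\x, y) \sim \mu$ while $\nnw(n)$ is held fixed (it is measurable with respect to the past), I can then substitute part (iii) of Lemma~\ref{lem:wubound}, namely $\norm{\gradwu{\nnw, \x, y}} \le K(1 + \norm{\thetavb}^2)$, and pull the deterministic factor out of the integral to obtain
$$\Exp{\norm{\gradwu{\nnw(n), \x, y}}} \le K\left(1 + \norm{\thetavb(n)}^2\right).$$

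The remaining and only substantive step is to convert this quadratic bound in $\norm{\thetavb(n)}$ into the desired bound that is linear in $\norm{\nnw(n)}$. Here I would invoke Theorem~\ref{thm:stability}, which guarantees that $K' \equiv \sup_{n \ge 0} \norm{\nnw(n)} < \infty$ almost surely; enlarging $K'$ if necessary, I may assume $K' \ge 1$. Because $\thetavb$ is a sub-vector of $\nnw$, we have $\norm{\thetavb(n)} \le \norm{\nnw(n)} \le K'$ for every $n$, and hence
$$\norm{\thetavb(n)}^2 \le K' \norm{\thetavb(n)} \le K' \norm{\nnw(n)}.$$
Substituting back yields $K(1 + \norm{\thetavb(n)}^2) \le K(1 + K'\norm{\nnw(n)}) \le KK'(1 + \norm{\nnw(n)})$, where the last inequality uses $K' \ge 1$. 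This is precisely the claimed estimate, and it holds on the almost-sure event on which the trajectory is bounded.

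The main obstacle, as anticipated, is exactly this mismatch between the quadratic growth coming from the input-layer gradient bound and the linear growth demanded by the Marchaud-map framework of Fact~\ref{factobj}. It is resolved entirely by the stability result: one extra factor of $\norm{\thetavb(n)}$ can be absorbed into the sample-path-dependent constant $K'$ precisely because the weight trajectory is almost surely bounded, which is also why $K'$ must be allowed to depend on the realized sample path rather than being uniform. I would finally remark that the same computation delivers Fact~\ref{factvb} as well, and is in fact simpler there: the linear estimate of Lemma~\ref{lem:wubound}(ii) already bounds $\norm{\gradvb{\nnw,\x,y}}$ by $K(1+\norm{\thetavb})$, and clipping only decreases the norm, so no conversion from a quadratic term is even required.
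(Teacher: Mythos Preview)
Your argument is correct and matches the paper's own justification: the paper bounds the pointwise gradient by $K(1+\norm{\nnw(n)}\cdot\sup_{m\ge 0}\norm{\nnw(m)})$ and absorbs one factor into $K'\equiv\sup_{m\ge 0}\norm{\nnw(m)}$ via Theorem~\ref{thm:stability}, exactly as you do. Your version is in fact slightly more careful, since you make the Jensen step explicit and note that $K'\ge 1$ is needed for the final inequality $K(1+K'\norm{\nnw(n)})\le KK'(1+\norm{\nnw(n)})$, a point the paper leaves implicit.
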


\vspace{.1cm}

\begin{fact} \label{factMartingale}
$\norm{\Mvb(n+1)} \le 2KK'(1 + \norm{\nnw(n)}) \ a.s.$ and $\norm{\Mwu(n+1)} \le 2KK'(1 + \norm{\nnw(n)}) \ a.s.$ The constants $K$ and $K'$ are as in the previous two facts.
\end{fact}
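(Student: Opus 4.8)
The plan is to control each martingale difference by the triangle inequality and then to reuse Facts~\ref{factvb} and \ref{factwu} for the conditional-expectation halves. Each of $\Mvb(n+1)$ and $\Mwu(n+1)$ is of the form $Z - \Exp{Z}$, where $Z$ is the (clipped, in the output-layer case) sample gradient evaluated at the drawn $(\x,y)\sim\mu$ with $\nnw(n)$ held fixed. Thus $\norm{\Mvb(n+1)} \le \norm{Z_{vb}} + \norm{\Exp{Z_{vb}}}$ and $\norm{\Mwu(n+1)} \le \norm{Z_{wu}} + \norm{\Exp{Z_{wu}}}$, where $Z_{vb}$ denotes the clipped output-layer gradient and $Z_{wu} \equiv \gradwu{\nnw(n),\x,y}$. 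The second summand on each line is precisely the quantity already bounded by $KK'(1+\norm{\nnw(n)})$ in Facts~\ref{factvb} and \ref{factwu}, so it remains only to bound the raw sample terms $Z_{vb}$ and $Z_{wu}$ by the same expression; adding the two halves then produces the factor $2$.

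For the output layer, clipping never increases the norm, so $\norm{Z_{vb}} \le \norm{\gradvb{\nnw(n),\x,y}}$, and Lemma~\ref{lem:wubound}(ii), which holds uniformly over the compact sets $\Xit,\Yit$ by Assumption~\ref{asmp:XYcompact}, gives $\norm{\gradvb{\nnw(n),\x,y}} \le K(1+\norm{\thetavb(n)}) \le K(1+\norm{\nnw(n)})$, which is at most $KK'(1+\norm{\nnw(n)})$ once we take $K'\ge 1$ without loss of generality. For the inner layer the corresponding estimate, Lemma~\ref{lem:wubound}(iii), is only quadratic, $\norm{Z_{wu}} \le K(1+\norm{\thetavb(n)}^2)$. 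I would linearize it exactly as in Facts~\ref{factvb} and \ref{factwu}, invoking the sample-path bound $K' \equiv \sup_{n\ge 0}\norm{\nnw(n)} < \infty \ a.s.$ furnished by Theorem~\ref{thm:stability}: since $\norm{\thetavb(n)} \le \norm{\nnw(n)} \le K'$, we have $\norm{\thetavb(n)}^2 \le K'\norm{\thetavb(n)} \le K'\norm{\nnw(n)}$, whence $\norm{Z_{wu}} \le K(1 + K'\norm{\nnw(n)}) \le KK'(1+\norm{\nnw(n)})$.

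Combining, each of $\Mvb(n+1)$ and $\Mwu(n+1)$ is bounded by a sum of two terms, each at most $KK'(1+\norm{\nnw(n)})$, which yields the claimed $2KK'(1+\norm{\nnw(n)})$. The $a.s.$ qualifier is inherited throughout because $K'$ is sample-path dependent and finite only almost surely.

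I do not expect a genuine obstacle: the statement is essentially a triangle-inequality bookkeeping step layered on top of Facts~\ref{factvb} and \ref{factwu}. The one point needing care is that the bound must hold for the \emph{raw} sample gradients, not merely their conditional expectations; this is why the uniform-in-$(\x,y)$ growth bounds of Lemma~\ref{lem:wubound}, resting on the compactness in Assumption~\ref{asmp:XYcompact}, are required, and why the quadratic inner-layer estimate must be converted to linear growth using the trajectory bound $K'$ rather than any deterministic constant.
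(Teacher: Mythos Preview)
Your proposal is correct and matches the paper's approach. The paper disposes of Fact~\ref{factMartingale} in a single sentence, noting that it ``directly follows from the definitions of $\Mvb$ and $\Mwu$ in combination with Facts~\ref{factvb} and \ref{factwu}''; what makes this work is that the paper's proof of Facts~\ref{factvb} and \ref{factwu} already establishes the pointwise sample-gradient bound \eqref{eq:regfact23}, $\norm{g(\nnw(n),\x,y)} \le KK'(1+\norm{\nnw(n)})$, from which the expectation bounds were derived, so the triangle-inequality argument you spell out is exactly what that sentence encodes.
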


\vspace{.1cm}

Assuming these facts for now, we may use the theory developed in \cite{benaim2005stochastic} to conclude that \eqref{eq:regressionupdaterewrite}, and hence \eqref{eq:regressionupdate}, has the same asymptotic behavior as the associated differential inclusion $\dot{\nnw}(t) \in \set{\begin{pmatrix}\Exp{\frac{\gradvb{\nnw(t), \x, y}}{\nicefrac{\norm{\gradvb{\nnw(t), \x, y}}}{\lambda} \vee 1}} \\
\Exp{\gradwu{\nnw(t), \x, y}}
\end{pmatrix}}$, which is really the ordinary differential equation \begin{equation}
    \label{eq:regressionode}
    \dot{\nnw}(t) = \begin{pmatrix}\Exp{\frac{\gradvb{\nnw(t), \x, y}}{\nicefrac{\norm{\gradvb{\nnw(n), \x, y}}}{\lambda} \vee 1}} \\
\Exp{\gradwu{\nnw(t), \x, y}}
\end{pmatrix}.
\end{equation}

Now, consider Theorem~2 from Chapter~6 of \cite{aubin2012differential}. It states that

\vspace{.2cm}

\noindent
\textit{``Let $F$ be a continuous map from a closed subset $\mathcal{K} \subset \mathbb{R}^d$ to $\mathbb{R}^d.$ Let $x(\cdotp)$ be a solution trajectory to the o.d.e. $\dot{x}(t) = F(x(t))$ such that $x(t) \in \mathcal{K}$ for $t \ge 0.$ If the solution converges to some $x^* \in \mathcal{K}$, then $x^*$ is an equilibrium of $F.$''}

\vspace{.2cm}

It follows from Theorem~\ref{thm:stability} that there exists a sample path dependent compact set $\mathcal{C} \subset \mathbb{R}^d$ such that the algorithm iterates and the o.d.e. solution, \eqref{eq:regressionode}, tracked by it remain inside $\mathcal{C}.$ Suppose the algorithm \eqref{eq:regressionupdate} converges to $\theta(\infty),$ then so does the tracking o.d.e. solution. Now, we utilize the above stated theorem from viability theory (Theorem~2 from Chapter~6 of \cite{aubin2012differential}) to conclude that $ \begin{pmatrix}\Exp{\frac{\gradvb{\nnw(\infty), \x, y}}{\nicefrac{\norm{\gradvb{\nnw(\infty), \x, y}}}{\lambda} \vee 1}} \\
\Exp{\gradwu{\nnw(\infty), \x, y}} \end{pmatrix} = \vec{0},$ where $\vec{0}$ is the vector of all zeroes - $\begin{pmatrix}
    0 \\ \vdots \\ 0
\end{pmatrix}.$
We see that the regression update \eqref{eq:regressionupdate} converges to a set of neural network weights with the zero vector as the expected loss-gradient. 

It is left to show Facts~\ref{factobj}-\ref{factMartingale}, and we begin with the first in the list. In order to show the Marchaudness of the said set-valued map, we just need to show that the original point-to-point map is continuous. This is because the linear growth property readily follows from Lemma~\ref{lem:wubound}, and the compactness and convexity of the range follows from the fact that the range consists of singleton sets that are trivially compat and convex. It is also worth mentioning that the linear growth property can be proven using the arguments involved in showing Facts~\ref{factvb} and \ref{factwu}. In other words, we need to show that $\begin{pmatrix}\Exp{\frac{\gradvb{\nnw(n), \x, y}}{\nicefrac{\norm{\gradvb{\nnw(n), \x, y}}}{\lambda} \vee 1}} \\
\Exp{\gradwu{\nnw(n), \x, y}} \end{pmatrix} \to \begin{pmatrix}\Exp{\frac{\gradvb{\nnw(\infty), \x, y}}{\nicefrac{\norm{\gradvb{\nnw(\infty), \x, y}}}{\lambda} \vee 1}} \\
\Exp{\gradwu{\nnw(\infty), \x, y}} \end{pmatrix}$ when $\theta(n) \to \theta(\infty).$ First, let us define $g(\nnw, \x, y) \equiv \begin{pmatrix}\frac{\gradvb{\nnw, \x, y}}{\nicefrac{\norm{\gradvb{\nnw, \x, y}}}{\lambda} \vee 1} \\
\gradwu{\nnw, \x, y} \end{pmatrix}$ and $G(\nnw) \equiv \begin{pmatrix}\Exp{\frac{\gradvb{\nnw, \x, y}}{\nicefrac{\norm{\gradvb{\nnw, \x, y}}}{\lambda} \vee 1}} \\
\Exp{\gradwu{\nnw, \x, y}} \end{pmatrix}.$ Next, we make two observations: (i) the NN is composed of activation functions that are continuous (see Assumption~\ref{asmp:nnact}), (ii) the max operator is continuous. We couple these observations with the components of the loss-gradient given by \eqref{eq:reggrad}, then we get that for every fixed $x \in \Xit$ and $y \in \Yit,$ $g(\nnw(n), \x, y) \to g(\nnw(\infty), \x, y).$ Let $\hat{K} \equiv \sup \limits_{n \ge 0} \norm{\nnw(n)} \vee 1,$ since Lemma~\ref{lem:wubound} can be used to infer that $\norm{g(\nnw, \x, y)} \le K(1 + \norm{\theta}^2),$ we get
\begin{equation}
    \label{eq:regdct}
    \norm{g(\nnw(n), \x, y)} \le K\left(1 + \left(\sup \limits_{n \ge 0} \norm{\nnw(n)} \right)^2 \right) \le K  (1 + \hat{K}^2).
\end{equation}

Here, $\hat{K} < \infty$ is a sample-path dependent constant. Hence, we can use the Dominated Convergence Theorem, \cite{durrett2019probability}, to conclude that $G(\nnw(n)) \to G(\nnw(\infty)).$

To show facts~\ref{factvb} and \ref{factwu}, we observe the following:

\begin{equation}
    \label{eq:regfact23}
    \norm{g(\nnw(n), \x, y)} \le K\left(1 + \norm{\nnw(n)} \times \sup \limits_{n \ge 0} \norm{\nnw(n)}  \right) \le K K' (1 + \norm{\nnw(n)}),
\end{equation}
where $K' \equiv \sup \limits_{n \ge 0} \norm{\nnw(n)} < \infty$ is a sample path dependent constant obtained from Theorem~\ref{thm:stability}. Finally, Fact~\ref{factMartingale} directly follows from the definitions of $\Mvb$ and $\Mwu$ in combination with Facts~\ref{factvb} and \ref{factwu}.

\subsection{Classification} \label{sec:classifyconvergence}
Like regression, we start by rewriting the cross-entropy based classification given by \eqref{eq:classificationupdate} as follows:

\begin{equation} \label{eq:classificationupdaterewrite}
    \nnw(n+1) = \nnw(n) - a(n) \left[\Exp{\gradg{\nnw(n), \x, y)}} + M(n+1) \right],
\end{equation}
where $M(n+1) \equiv \gradg{\nnw(n), \x(n), y(n))} - \Exp{\gradg{\nnw(n), \x, y)}}.$ Recall that we do not have to clip the gradients before updating the output layer since they are bounded to begin with. Like in Section~\ref{sec:regressionconvergence}, we state a few facts that facilitate the application of the theory developed in \cite{benaim2005stochastic}. However, unlike in the previous section, we will not show that these facts hold true for classification with cross-entropy loss. This is because the steps involved are identical to those in the previous section.

\vspace{.1cm}

\begin{fact}
    \label{factMarchaudc}
    Almost surely, $\nnw \mapsto \set{\Exp{\gradg{\nnw, \x, y)}}}$ is a Marchaud map for $\nnw \in \overline{B}_{K'}(\bm{0}),$ where $K' \equiv \sup \limits_{n \ge 0} \norm{\nnw(n)}$ is a sample path dependent finite real number, and $\overline{B}_{K'}(\bm{0})$ represents the sphere of radius $K'$ centered at the origin.
\end{fact}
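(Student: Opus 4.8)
The plan is to verify the three defining properties of a Marchaud map (convexity and compactness of the image, linear growth, and upper semicontinuity) for the singleton-valued map $\nnw \mapsto \set{\Exp{\gradg{\nnw, \x, y}}}$, exactly as was done for the regression map in Fact~\ref{factobj}. Since the image is a singleton for every $\nnw$, it is trivially convex and compact, so the first property is immediate. The linear-growth property follows directly from Lemma~\ref{lem:wubound2}: there the per-sample gradient norms are bounded by $K$ (output layer) and by $K(1 + \norm{\thetavb}) \le K(1 + \norm{\nnw})$ (inner layer), so taking expectations and using $\norm{\Exp{\gradg{\nnw, \x, y}}} \le \Exp{\norm{\gradg{\nnw, \x, y}}}$ (Jensen's inequality for the convex norm) gives $\norm{\Exp{\gradg{\nnw, \x, y}}} \le K(1 + \norm{\nnw})$. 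The only substantive step, as in the regression case, is upper semicontinuity, which for a single-valued map reduces to ordinary continuity of $\nnw \mapsto \Exp{\gradg{\nnw, \x, y}}$ on $\overline{B}_{K'}(\bm{0})$.

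To establish continuity I would first fix $(\x, y) \in \Xit \times \Yit$ and argue that the per-sample gradient $\gradg{\nnw, \x, y}$ is continuous in $\nnw$. This is where I read off the explicit components in \eqref{eq:crossentropygrad}: each entry is a product of an activation output $\sigmab_j$, a softmax difference $(z_i - z_{i'})$, and, for the inner-layer entries, an activation derivative and an input coordinate. Under Assumption~\ref{asmp:nnact} the activations and their derivatives are continuous, the softmax outputs $z_1, z_2$ are continuous functions of $\nnw$, and compositions of continuous maps are continuous; hence $\gradg{\nnw(n), \x, y} \to \gradg{\nnw(\infty), \x, y}$ whenever $\nnw(n) \to \nnw(\infty)$.

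To pass this pointwise convergence through the expectation I would invoke the Dominated Convergence Theorem. On the ball $\overline{B}_{K'}(\bm{0})$ with $K' \equiv \sup_{n \ge 0}\norm{\nnw(n)}$, Lemma~\ref{lem:wubound2} yields the uniform bound $\norm{\gradg{\nnw, \x, y}} \le K(1 + K')$, a finite, sample-path-dependent constant; combined with the compactness of $\Xit$ and $\Yit$ (Assumption~\ref{asmp:XYcompact}) this furnishes an integrable (indeed constant) dominating function. DCT then gives $\Exp{\gradg{\nnw(n), \x, y}} \to \Exp{\gradg{\nnw(\infty), \x, y}}$, i.e.\ continuity of the map, completing the verification.

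The main obstacle is the same one flagged in the regression argument: the dominating bound is available only after stability has been established, so the constant $K'$ is random and finite only almost surely (via Theorem~\ref{thm:stability}). This is precisely why the statement is restricted to the random ball $\overline{B}_{K'}(\bm{0})$ and asserted only almost surely. The continuity of the cross-entropy gradient itself is routine once one notes that the apparent singularities $1/(z_i(1-z_i))$ appearing in \eqref{eq:crossentropygradz} cancel in the reduced expressions \eqref{eq:crossentropygrad}, so no separate argument bounding the softmax outputs away from $0$ or $1$ is required.
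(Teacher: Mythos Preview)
Your proposal is correct and takes essentially the same approach as the paper: the paper does not give a separate proof of this fact but explicitly states that ``the steps involved are identical to those in the previous section,'' i.e., the regression argument for Fact~\ref{factobj}, which is precisely what you have reproduced. Your verification of the three Marchaud properties---trivial convexity/compactness of singletons, linear growth via Lemma~\ref{lem:wubound2} and Jensen, and continuity via the explicit formulas \eqref{eq:crossentropygrad} together with DCT using the uniform bound $K(1+K')$ on $\overline{B}_{K'}(\bm{0})$---mirrors the paper's treatment of the regression case line by line.
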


\vspace{.1cm}

\begin{fact} \label{factvbc}
Almost surely, $\norm{\Exp{\gradg{\nnw(n), \x, y)}}} \le KK'(1 + \norm{\nnw(n)})$ for all $n \ge 0,$ where $K' < \infty$ is a sample path dependent constant from the previous fact, and $K$ is from Lemma~\ref{lem:wubound2}.
\end{fact}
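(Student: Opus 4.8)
The plan is to mirror the proof of Facts~\ref{factvb} and \ref{factwu}, substituting Lemma~\ref{lem:wubound2} for Lemma~\ref{lem:wubound}; the argument is in fact lighter, since the classification gradient grows only linearly (rather than quadratically) in the weight norm. First I would record a deterministic pointwise bound on the full gradient. Writing $\gradg{\nnw, \x, y}$ as the concatenation of $\gradvb{\nnw, \x, y}$ and $\gradwu{\nnw, \x, y}$ (with $\ell \equiv \ell_c$) and invoking Lemma~\ref{lem:wubound2}, which gives $\norm{\gradvb{\nnw, \x, y}} \le K$ and $\norm{\gradwu{\nnw, \x, y}} \le K(1 + \norm{\thetavb})$, the triangle inequality yields
\begin{equation*}
\norm{\gradg{\nnw, \x, y}} \le K\left(2 + \norm{\thetavb}\right) \le K\left(2 + \norm{\nnw}\right),
\end{equation*}
uniformly over $\x \in \Xit$ and $y \in \Yit$, where the last step uses $\norm{\thetavb} \le \norm{\nnw}$.

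Next I would pass to the expectation over the data distribution $\mu$. Because the right-hand side above depends only on the iterate $\nnw(n)$ and not on the random pair $(\x, y)$, it is a constant dominating bound with respect to $\Exp{\cdot}$, so Jensen's inequality for the (convex) norm gives
\begin{equation*}
\norm{\Exp{\gradg{\nnw(n), \x, y}}} \le \Exp{\norm{\gradg{\nnw(n), \x, y}}} \le K\left(2 + \norm{\nnw(n)}\right).
\end{equation*}
Finally, invoking Theorem~\ref{thm:stability}, which supplies the sample-path-dependent constant $K' \equiv \sup_{n \ge 0}\norm{\nnw(n)} < \infty$ a.s.\ used in the previous fact, and enlarging $K'$ so that $K' \ge 2$ if necessary, the elementary inequality $2 + \norm{\nnw(n)} \le K'(1 + \norm{\nnw(n)})$ holds for every $n$; this delivers the claimed bound $\norm{\Exp{\gradg{\nnw(n), \x, y}}} \le KK'(1 + \norm{\nnw(n)})$, with the ``almost surely'' qualifier attaching precisely to the finiteness of $K'$.

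I do not expect a genuine obstacle here: this is the lightest of the facts, and the only point requiring care is essentially cosmetic. Unlike the regression case, where $K'$ is \emph{forced} by the need to downgrade a quadratic term via $\norm{\nnw}^2 \le K'\norm{\nnw}$, here the gradient is already linear in $\norm{\nnw}$, so $K'$ enters only to absorb the additive constant $2$ and to present the bound in the unified linear-growth form shared with regression. The substantive content is thus confined to (i) reading off the pointwise linear bound from Lemma~\ref{lem:wubound2} and (ii) observing that the expectation passes through unchanged, which is immediate because the dominating bound is independent of $(\x, y)$.
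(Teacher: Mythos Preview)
The proposal is correct and follows essentially the same approach as the paper: the paper does not prove this fact separately but explicitly defers to the regression argument (Facts~\ref{factvb}--\ref{factwu}) with Lemma~\ref{lem:wubound2} in place of Lemma~\ref{lem:wubound}, which is precisely what you do. Your added remarks---making the passage through the expectation explicit via Jensen, and noting that here $K'$ serves only to absorb an additive constant rather than to linearize a quadratic---are accurate refinements of an argument the paper leaves implicit.
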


\vspace{.1cm}

\begin{fact} \label{factMartingalec}
$\norm{M(n+1)} \le 2KK'(1 + \norm{\nnw(n)}) \ a.s.$ The constants $K$ and $K'$ are as in the previous two facts.
\end{fact}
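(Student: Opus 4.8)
The plan is to mirror the argument used for Fact~\ref{factMartingale} in the regression setting, since here $M(n+1)$ has exactly the same martingale-difference structure. The starting point is the triangle inequality applied to the definition $M(n+1) \equiv \gradg{\nnw(n), \x(n), y(n)} - \Exp{\gradg{\nnw(n), \x, y}}$, which gives
\[
\norm{M(n+1)} \le \norm{\gradg{\nnw(n), \x(n), y(n)}} + \norm{\Exp{\gradg{\nnw(n), \x, y}}}.
\]
The second term is controlled directly by Fact~\ref{factvbc}, which already yields $\norm{\Exp{\gradg{\nnw(n), \x, y}}} \le KK'(1 + \norm{\nnw(n)})$ almost surely. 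Hence the whole task reduces to bounding the first (sample) term by the same quantity.

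For the sample term I would work pointwise for each fixed realization $(\x(n), y(n))$. By Lemma~\ref{lem:wubound2} we have $\norm{\gradvb{\nnw, \x, y}} \le K$ and $\norm{\gradwu{\nnw, \x, y}} \le K(1 + \norm{\thetavb})$, so that
\[
\norm{\gradg{\nnw, \x, y}} \le \norm{\gradvb{\nnw, \x, y}} + \norm{\gradwu{\nnw, \x, y}} \le K\big(2 + \norm{\thetavb}\big).
\]
Using $\norm{\thetavb} \le \norm{\nnw}$ and absorbing the constant factor $2$ into $K$ (as is done repeatedly in the earlier lemmas, without loss of generality), this reads $\norm{\gradg{\nnw, \x, y}} \le K(1 + \norm{\nnw})$. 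Note that, unlike the regression case, the classification gradient already grows only linearly in $\norm{\nnw}$ (Lemma~\ref{lem:wubound2} contains no quadratic term), so the factor $K'$ is not needed to tame a quadratic but is retained only to keep the statement uniform with Fact~\ref{factvbc}: since $K' \equiv \sup_{n \ge 0}\norm{\nnw(n)} \ge 1$ may be assumed, we may write $\norm{\gradg{\nnw(n), \x(n), y(n)}} \le KK'(1 + \norm{\nnw(n)})$. Adding the two bounds then yields $\norm{M(n+1)} \le 2KK'(1 + \norm{\nnw(n)})$, as claimed.

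The one genuinely non-routine ingredient is that the entire chain of inequalities is an almost-sure statement: the constant $K' = \sup_{n \ge 0}\norm{\nnw(n)}$ is finite only because of the stability conclusion of Theorem~\ref{thm:stability}, and it is sample-path dependent. I would therefore first fix a sample path in the probability-one event on which $\sup_{n \ge 0}\norm{\nnw(n)} < \infty$, and carry out the deterministic bounding on that path. Measurability and integrability of $\gradg{\nnw(n), \x(n), y(n)}$, needed for the expectation $\Exp{\cdot}$ to be well defined, follow from the continuity established while proving Fact~\ref{factMarchaudc} together with the compactness of $\Xit$ and $\Yit$ from Assumption~\ref{asmp:XYcompact}. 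Beyond this almost-sure bookkeeping, the estimate is entirely mechanical, which is precisely why the authors elide it as \emph{identical to the previous section}.
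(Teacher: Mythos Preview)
Your proposal is correct and matches the paper's approach: the paper explicitly declines to write out the proof, stating that the steps are identical to the regression case, where Fact~\ref{factMartingale} is obtained directly from the definition of the martingale-difference term together with the preceding linear-growth facts. Your triangle-inequality decomposition and use of Lemma~\ref{lem:wubound2} plus Fact~\ref{factvbc} is exactly that argument, and your observation that the classification gradient is already linear (so $K'$ is superfluous but harmless) is a correct refinement.
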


\vspace{.1cm}

The classification algorithm given by \eqref{eq:classificationupdate} or \eqref{eq:classificationupdaterewrite} tracks a solution to the ordinary differential equation given by $\dot{\nnw}(t) = \Exp{\gradg{\nnw(t), \x, y)}}.$ Let $\nnw(\infty)$ be the limit of the algorithm, then it is the equilibrium of the function $\Exp{\gradg{\cdotp, \x, y)}},$ \textit{i.e.,} $\Exp{\gradg{\nnw(\infty), \x, y)}} = \vec{0}.$

\subsection{Deep Q-Learning} \label{sec:dqnconvergence}
The convergence of Deep Q-Learning has been analyzed in \cite{ramaswamy2021deep} using the theory of dynamical systems. The ideas were built on the theory developed in \cite{borkar2006stochastic}. The analysis in \cite{ramaswamy2021deep} makes identical assumptions with regards to the activations, step-sizes, and state and action spaces, as this paper. However, there are two major deviations: (1) they additionally assume stability of the iterates, (2) they do not consider that the output layer weights are updated using clipped gradients. In the previous section - Section~\ref{sec:algo} - we have shown that the NN weights can be updated in a stable manner when the output layer weights are updated using some gradient clipping technique which ensures that the update-value at every step is norm-bounded. 

With respect to the convergence analysis of \eqref{eq:dqnupdate}, the key properties of the clipped Bellman loss gradient are similar to the traditional loss gradient considered in \cite{ramaswamy2021deep}. In particular, the continuity of the clipped loss gradient and the bound to the update-value (clipped gradient for the output layer, usual gradient for other layers) as a function of the NN weights are the pertinent ones. Hence, we can expect something similar to the main result, Theorem 1, of \cite{ramaswamy2021deep} if we were to follow the analysis presented there. The first step is to rewrite \eqref{eq:dqnupdate} in order to apply the analysis:

\begin{equation} \label{eq:dqnupdaterewrite}
    \begin{split}
        \thetavb(n+1) &= \thetavb(n) - a(n) \left[ \int_{\mathcal{S} \times \mathcal{A}} \frac{g_{vb} (\nnw(n), \x, \mathds{\alpha})}{\large\nicefrac{\norm{g_{vb}(\nnw(n), \x, \mathds{\alpha})}}{\lambda} \vee 1} \ \mu(n, d\x, d\alpha) + \Mvb(n+1) \right], \\
        \thetawu(n+1) &= \thetawu(n) - a(n) \left[\int_{\mathcal{S} \times \mathcal{A}} g_{wu}(\nnw(n), \x, \alpha) \ \mu(n, d\x, d\alpha) + \Mwu(n+1) \right],
    \end{split}
\end{equation}
where $\mu(n) \in \mathcal{P}(\mathcal{S} \times \mathcal{A})$ such that $\mu(n, \x(n,k), \alpha(n,k)) = 1/K$ for $1 \le k \le K$ and $\mu(n, x, \alpha) = 0$ for other $x \in \mathcal{S}$ and $\alpha \in \mathcal{A};$ \\ $g_{vb} (\nnw, \x, \mathds{\alpha}) = 2 \left( Q(\x,\alpha,\theta) - r(\x,\alpha) - \gamma \ \int_{\mathcal{S}}\max \limits_{a' \in \mathcal{A}} Q(z, a', \theta) \ p(dz | \x, \alpha, \nnw) \right) \nabla_{vb}Q(\x, \alpha, \nnw)$ such that $p$ is the state transition kernel; $g_{wu} (\nnw, \x, \mathds{\alpha}) = 2 \left( Q(\x,\alpha,\theta) - r(\x,\alpha)  - \gamma \ \int_{\mathcal{S}}\max \limits_{a' \in \mathcal{A}} Q(z, a', \theta) \ p(dz | \x, \alpha, \nnw) \right)$ $ \nabla_{wu}Q(\x, \alpha, \nnw);$ $$\Mvb(n+1) \equiv \frac{1}{K} \sum \limits_{k=1}^K \frac{\nabla_{vb} \ell_b(\nnw(n), \x(n,k), \mathds{\alpha}(n,k))}{\large\nicefrac{\norm{\nabla_{vb} \ell_b(\nnw(n), \x(n,k), \mathds{\alpha}(n,k))}}{\lambda} \vee 1} - \int_{\mathcal{S} \times \mathcal{A}} \frac{g_{vb} (\nnw(n), \x, \mathds{\alpha})}{\large\nicefrac{\norm{g_{vb}(\nnw(n), \x, \mathds{\alpha})}}{\lambda} \vee 1} \ \mu(n, d\x, d\alpha) ;$$
$$\Mwu(n+1) \equiv \frac{1}{K} \sum \limits_{k=1}^K \frac{\nabla_{wu} \ell_b(\nnw(n), \x(n,k), \mathds{\alpha}(n,k))}{\large\nicefrac{\norm{\nabla_{wu} \ell_b(\nnw(n), \x(n,k), \mathds{\alpha}(n,k))}}{\lambda} \vee 1} - \int_{\mathcal{S} \times \mathcal{A}} g_{wu} (\nnw(n), \x, \mathds{\alpha}) \ \mu(n, d\x, d\alpha) .$$

The modified Q-Learning algorithm given by \eqref{eq:dqnupdaterewrite} can be analyzed as in \cite{ramaswamy2021deep} to conclude that the limit $\nnw(\infty)$ satisfies the following property:
\begin{equation}
    \label{eq:dqn_conclude}
    \begin{split}
        \int_{\mathcal{S} \times \mathcal{A}} \frac{g_{vb} (\nnw(\infty), \x, \mathds{\alpha})}{\large\nicefrac{\norm{g_{vb}(\nnw(\infty), \x, \mathds{\alpha})}}{\lambda} \vee 1} \ \mu(\infty, d\x, d\alpha) &=  \vec{0}\\
        \int_{\mathcal{S} \times \mathcal{A}} g_{wu}(\nnw(\infty), \x, \alpha) \ \mu(\infty, d\x, d\alpha) &= \vec{0},
    \end{split}
\end{equation}

where $\mu(\infty)$ is a probability measure on the state-action space such that it is the limit of the $\mu(n)$ measures, as $n \to \infty$, in the Prohorov metric space. It must be noted that $\mu(\infty)$ is a function of the frequency with which state-action pairs are used in the training process.

Let us summarize the three convergence results in the form of a theorem.

\begin{thm}
    Under Assumptions~\ref{asmp:nnact}-\ref{asmp:steps}, we have that
    \begin{itemize}
        \item[(i)] The output-layer clipped regression algorithm \eqref{eq:regressionupdate} is stable and converges to $\nnw(\infty)$ such that $ \begin{pmatrix}\Exp{\frac{\nabla_{vb} \ell_r(\nnw(\infty), \x, y)}{\nicefrac{\norm{\nabla_{vb} \ell_r(\nnw(\infty), \x, y)}}{\lambda} \vee 1}} \\
\Exp{\nabla_{wu} \ell_r(\nnw(\infty), \x, y)} \end{pmatrix} = \vec{0},$ where $\mu$ is the underlying data distribution.
        \item[(ii)] The output-layer clipped classification algorithm \eqref{eq:classificationupdate} is stable and converges to $\nnw(\infty)$ such that $\Exp{\nabla_{\theta} \ell_c(\nnw(\infty), \x, y)\nnw(\infty), \x, y)} = \vec{0},$ where $\mu$ is as above.
        \item[(iii)] The output-layer clipped Q-Learning algorithm \eqref{eq:dqnupdate} is stable and converges to $\nnw(\infty)$ such that
        \begin{equation}
    \begin{split}
        \int_{\mathcal{S} \times \mathcal{A}} \frac{g_{vb} (\nnw(\infty), \x, \mathds{\alpha})}{\large\nicefrac{\norm{g_{vb}(\nnw(\infty), \x, \mathds{\alpha})}}{\lambda} \vee 1} \ \mu(\infty, d\x, d\alpha) &=  \vec{0}\\
        \int_{\mathcal{S} \times \mathcal{A}} g_{wu}(\nnw(\infty), \x, \alpha) \ \mu(\infty, d\x, d\alpha) &= \vec{0},
    \end{split}
\end{equation}
where $g_{vb} (\nnw, \x, \mathds{\alpha}) = 2 \left(  Q(\x,\alpha,\theta) - r(\x,\alpha) - \gamma \ \int_{\mathcal{S}}\max \limits_{a' \in \mathcal{A}} Q(z, a', \theta) \ p(dz | \x, \alpha, \nnw) \right) \nabla_{vb}Q(\x, \alpha, \nnw),$ $g_{wu} (\nnw, \x, \mathds{\alpha}) = 2 \left(  Q(\x,\alpha,\theta) - r(\x,\alpha) - \gamma \ \int_{\mathcal{S}}\max \limits_{a' \in \mathcal{A}} Q(z, a', \theta) \ p(dz | \x, \alpha, \nnw) \right)$ $ \nabla_{wu}Q(\x, \alpha, \nnw)$ and $\mu(\infty)$ is a limit of the $\mu(n)$ measures in the Prohorov metric.
    \end{itemize}
\end{thm}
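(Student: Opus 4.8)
The plan is to assemble the three convergence statements from the per-setting analyses, all of which rest on a common two-stage argument: first establish almost sure stability of the iterates, then invoke the stochastic-approximation-via-differential-inclusion machinery of \cite{benaim2005stochastic} to characterize the limit. The starting point for all three parts is Theorem~\ref{thm:stability}, which guarantees $\sup_{n \ge 0}\norm{\nnw(n)} < \infty$ a.s. for each of \eqref{eq:regressionupdate}, \eqref{eq:classificationupdate} and \eqref{eq:dqnupdate}. This yields, on almost every sample path, a random but finite constant $K' \equiv \sup_{n \ge 0}\norm{\nnw(n)}$ and hence a compact set $\overline{B}_{K'}(\bm{0})$ in which every iterate lives; restricting attention to this set is what makes the growth and noise conditions of the differential-inclusion theory verifiable with a sample-path-dependent constant.

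For parts (i) and (ii), I would rewrite each update in mean-field-plus-martingale-difference form, exactly as in \eqref{eq:regressionupdaterewrite} and \eqref{eq:classificationupdaterewrite}, so that the drift is the conditional expectation $G(\nnw) \equiv \Exp{g(\nnw,\x,y)}$ and the remainder $\Mvb, \Mwu$ (resp.\ $M$) is a martingale difference. The key verifications are packaged as Facts~\ref{factobj}--\ref{factMartingale} (resp.\ Facts~\ref{factMarchaudc}--\ref{factMartingalec}): continuity of the (normalized) loss-gradient field on the compact set, obtained from continuity of the activations and of the $\max$ operator and pushed through the expectation by the Dominated Convergence Theorem using the uniform bound \eqref{eq:regdct}; the linear-growth bound from Lemmas~\ref{lem:wubound} and \ref{lem:wubound2}; and the consequent control of the martingale noise. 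Together these make $\nnw \mapsto \set{G(\nnw)}$ a Marchaud map, so \cite{benaim2005stochastic} applies and the iterates asymptotically track the flow of $\dot{\nnw}(t) = G(\nnw(t))$. Finally, because the iterates and the tracked solution remain in the compact set $\mathcal{C}$, I would invoke Theorem~2 of Chapter~6 of \cite{aubin2012differential} (viability theory): if the trajectory converges to some $\nnw(\infty) \in \mathcal{C}$, that limit must be an equilibrium of $G$, which is precisely the stated zero-expected-gradient condition.

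For part (iii) the route is the same in spirit but delegates the technical core to \cite{ramaswamy2021deep}. I would first recast \eqref{eq:dqnupdate} in the occupation-measure form \eqref{eq:dqnupdaterewrite}, where the empirical sampling measure $\mu(n)$ replaces the fixed data distribution $\mu$. The two points where \cite{ramaswamy2021deep} must be reconciled with the present setting are exactly the two noted deviations: their stability hypothesis is now supplied internally by Theorem~\ref{thm:stability}, and the output-layer clipping must be shown not to destroy the properties their analysis needs. I would argue the latter by noting that the normalization factor $\nicefrac{\norm{\cdot}}{\lambda} \vee 1$ is continuous and leaves the linear-growth estimate of Lemma~\ref{lem:wubound3} intact, so the clipped Bellman field is still a Marchaud map with the required growth. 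Tracking the associated differential inclusion and taking limits along the $\mu(n)$ then yields \eqref{eq:dqn_conclude}, with $\mu(\infty)$ the Prohorov-limit of the sampling measures.

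The main obstacle I anticipate is not any single estimate but the correct handling of the sample-path-dependent domain and the interchange of limits it forces. Because $K'$ and hence $\mathcal{C}$ are random, the Marchaud, growth, and noise conditions must be checked conditionally on $\set{\sup_{n}\norm{\nnw(n)} \le C}$ and then released by letting $C \uparrow \infty$, mirroring the conditioning argument used in Lemma~\ref{lem:wustable}; care is needed to ensure the conclusions of \cite{benaim2005stochastic} transfer to the full probability space. A secondary difficulty, specific to (iii), is establishing that the occupation measures $\mu(n)$ actually converge in the Prohorov metric and that this limit interacts correctly with the transition kernel $p(dz \mid \x, \alpha, \nnw)$ inside the $\max$ operator; this is where the reliance on \cite{ramaswamy2021deep} and \cite{borkar2006stochastic} is heaviest.
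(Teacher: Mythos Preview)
Your proposal is correct and follows essentially the same approach as the paper: the paper's proof of this theorem is a two-sentence summary pointing back to Theorem~\ref{thm:stability} for stability and to Sections~\ref{sec:regressionconvergence}--\ref{sec:dqnconvergence} for the convergence analyses, and your outline accurately reconstructs the content of those sections (mean-field-plus-martingale rewrite, Facts~\ref{factobj}--\ref{factMartingalec}, the Bena\"im framework, the viability-theory equilibrium argument, and for Q-Learning the delegation to \cite{ramaswamy2021deep} with the two noted deviations supplied). Your closing concerns about the sample-path-dependent compact set and the Prohorov convergence of $\mu(n)$ are legitimate technical points, but the paper does not treat them more carefully than you do---it simply invokes the cited references.
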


\begin{proof}

    \vspace{.2cm}

We have shown stability in Theorem~\ref{thm:stability}. We analyzed the convergence of regression, classification and Q-Learning in Section~\ref{sec:regressionconvergence}, \ref{sec:classifyconvergence} and \ref{sec:dqnconvergence}, respectively.

\end{proof}

\section{Experimental Results} \label{sec:exp}
As stated earlier, we present a novel activation function called Truncated GELU (tGELU). It is obtained by modifying the Gaussian Error Linear Unit (GELU) activation function \cite{1}. It has two parameters $t_l \leq 0$ and $t_r\geq 0$ - the left and right threshold values. Let us use $\mathcal{N}$ to denote a normal random variable with zero mean and unit variance. Then, tGELU can be specified as follows in \eqref{tgelu}.

\begin{equation}\label{tgelu}
tGELU(x) \stackrel{\Delta}{=} 
\begin{cases}
\quad t_rP(\mathcal{N}\leq t_r) + (x-t_r)P(\mathcal{N}\geq x-t_r) &, \quad x\geq t_r \\
\quad xP(\mathcal{N}\leq x) &, \quad 0\leq x \leq t_r \\
\quad xP(\mathcal{N}\geq x) &, \quad t_l\leq x \leq 0 \\
\quad t_lP(\mathcal{N}\geq t_l) + (x-t_l)P(\mathcal{N}\leq x-t_l) &, \quad x\leq t_l \\
\end{cases}
\end{equation}

\begin{figure}[h]
\centering
\includegraphics[scale=0.5]{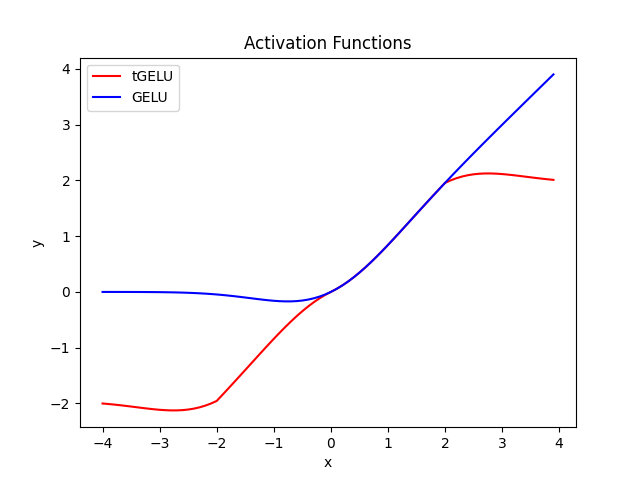}
\includegraphics[scale=0.5]{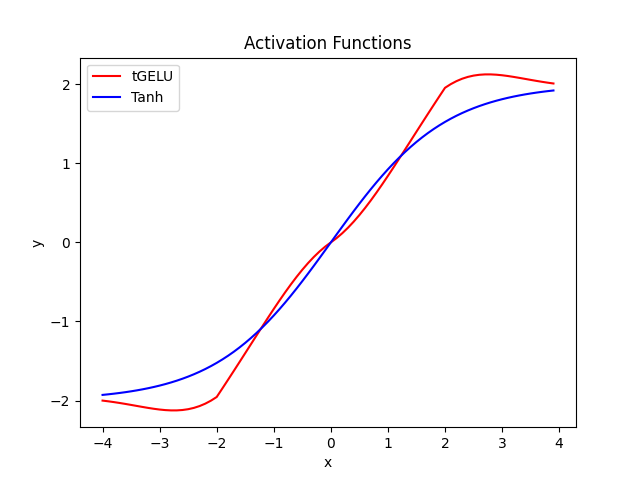}
\caption{The blue curves in both plots represent tGELU activation function with $t_l = -2$ and $t_r = 2.$ In the left-hand plot, tGELU is compared to the standard GELU. In the right-hand plot, tGELU is compared to $2Tanh(x/2)$.}
\label{fig:tgelu}
\end{figure}

Truncated GELU with $t_l=-2$ and $t_l=2$ is illustrated in Figure \ref{fig:tgelu}. Here, it is compared to the standard GELU function and the $x \mapsto 2Tanh(x/2)$ function. When compared to GELU, it does not discard all negative input, this may be desirable for some applications. It has a similar signature as compared to $2Tanh(x/2)$, but tGELU can be made asymmetric by choosing appropriate values for $t_l$ and $t_r,$ while $aTanh(x/a)$ is symmetric for every $a \ge 1$. Note that, by choosing $a > 1,$ $aTanh(x/a)$ saturated slower than $Tanh(x).$

In our experiments we compare the performances of DNNs with tGELU ($t_l = -1$ and $t_r = 1$) and GELU. We train the DNN with tGELU using our routine \eqref{eq:genericupdate}, while the DNN with GELU is trained using standard stochastic gradient descent (SGD). We consider one task from classification, that of classifying images as one of two types, cats or dogs. The other is the control task of balancing a cartpole using Deep Q-Learning. Let us begin with the classification task. We obtained the dataset containing images of cat and dog from Kaggle (https://www.kaggle.com/datasets/tongpython/cat-and-dog). It contains $10000$ images in total, with equal number of dog and cat images. For training, we use $8000$ images with equal number of dog and cat images. The rest of the $2000$ images are used for testing. Before using the images we perform several pre-processing steps on the images. We resize all the images to a dimension of $64 \times 64$ and convert them to gray-scale. Each pixel of an image now lies in the range $[0,256]$. Subsequently, we normalize the range of values of pixels to [-1,1]. 

We implement the Convolutional Neural Network (CNN) using the PyTorch Python library \footnote{ The code related to classification task could be found at this URL: \url{https://github.com/namansaxena9/tGELU/}}. The neural network architecture contains several pairs of convolutional and max-pooling layers. The convolutional layers are responsible for feature extraction and the max-pooling layers are responsible for aggregating these features and decreasing the image dimension further. The output of the last pair of convolutional and max-pooling layers is fed into a fully connected feed-forward neural network with one hidden layer. The last layer of the feed-forward network produces the probabilities of the image being a cat and a dog. We use binary cross-entropy loss to train the network. We use the Stochastic Gradient Descent (SGD) optimizer from PyTorch library. We perform experiments using both the GELU and tGELU activations. While the GELU-DNN is updated using SGD, tGELU-DNN is updated using gradient clipping, only for the output layer. Note that to train the GELU-DNN we do not use gradient clipping of any sort. 
The models are trained for $400$ epochs. In each epoch, we use the training dataset containing $8000$ images. We use a mini-batch size of $256$ images for every training step. Further, every $20$ epochs, we use the hold-out test data containing $2000$ images to obtain the accuracy score of the model. This constitutes the evaluation step.
We compare the performance obtained using tGELU and GELU activation functions in Figure \ref{fig:tgeluvsgelu}. We observe that test accuracy obtained with tGELU with gradient clipping is better than that obtained with GELU. Further, variance in the test accuracy is evidently less for tGELU with gradient clipping which goes on to confirm the conclusion of Lemmas \ref{lem:wuvar} and \ref{lem:vbvar}.

\begin{figure}
\centering
\includegraphics[scale=0.6]{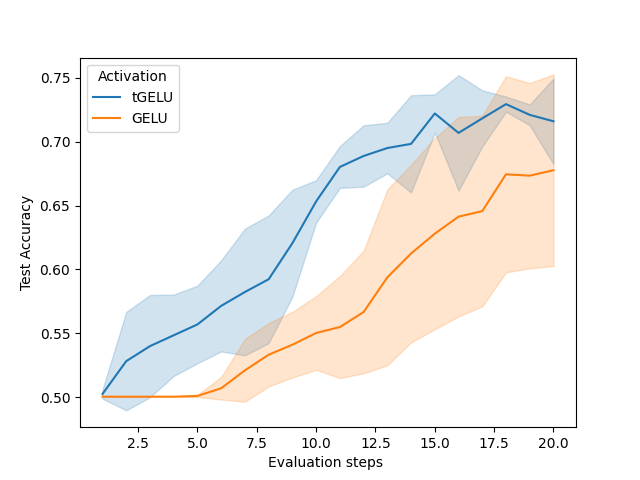}
\caption{Comparison of test accuracy of the cat-dog classification task for tGELU and GELU activation functions. Along x-axis we plot the training epochs, and the test accuracy is plotted along y-axis. The blue curve represents the test accuracy of the tGELU-DNN, and the red curve represents the accuracy of GELU-DNN.}
\label{fig:tgeluvsgelu}
\end{figure}

For the control task, we modified the CartPole-v1 environment of the OpenAI Gym library to include 3 actions: move-left, move-right, stay\footnote{ The code related to control task could be found at this URL: \url{https://github.com/namansaxena9/tGELU/}}. The agent has to maintain the pole angle between -12 to 12 degree. If the pole falls out of this range then the control objective has not been achieved and the episode terminates. The episode also terminates upon completing $1000$ time steps in the environment.
We trained an agent using Deep Q-learning on the cart-pole balancing task. We update the parameters of the network after each step and evaluate the performance of the agent after taking $5000$ environment steps. In each evaluation step we obtain the total reward of $5$ episodes and then use the average of the total reward across $5$ episodes as the measure of performance. The agent receives a reward of $+1$ when the cart-pole is in the $-12$ to $+12$ degree range. We use a discount factor of $0.99$ to calculate the Q-factors. The agent is trained for  $1$ million training steps using $\epsilon$-greedy policy. From $\epsilon$-greedy policy we mean, with probability $\epsilon$ random action is taken and with probability $1-\epsilon$ the action having the current highest Q-value is taken. At the start of the training the value of $\epsilon$ is take to be 1 and later the value is linearly annealed to 0.1 using initial 100k time steps in the environment. After 100k time steps the value of $\epsilon$ is kept constant at 0.1.  

\begin{figure}
    \centering
    \includegraphics[scale=0.6]{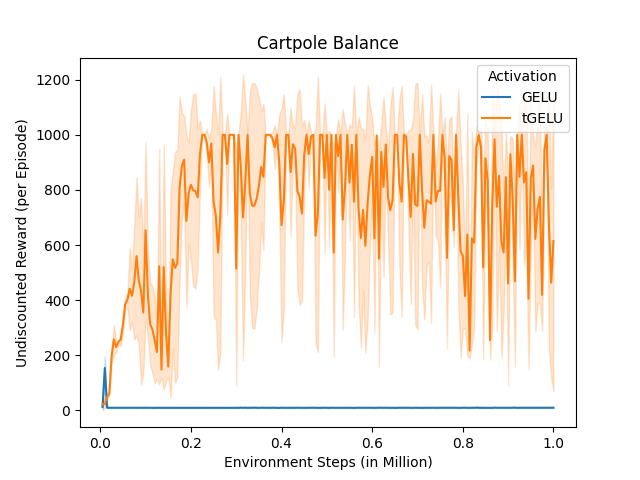}
    \caption{Comparison of the performance of Deep Q-Network with tGELU and GELU. Along x-axis, we plot the episodes and along y-axis we plot the total accumulated reward per episode. Note that our experiment contains $200$ performance evaluations (each performance evaluation takes place after an interval of 5000 environment steps), hence $200$ values. We rescale them so that the range of x-axis is between $0$ and $10^6.$}
    \label{fig:reward}
\end{figure}

A target network \cite{2} is usually used to stabilize the training of the agents in the off-policy setting. The target network is a copy of the main neural network. It is typically updated using the main network parameters at regular intervals using moving average scheme. First, we conducted an experiment to check our conjecture that our framework can be used to omit target networks. In this experiment, we trained the GELU and tGELU DQNs using Deep Q-Learning with the target network \textit{omitted} and noted the total reward performance. We again use the Stochastic Gradient Descent (SGD) optimizer of the PyTorch library. The tGELU-DQN is updated using \eqref{eq:dqnupdate}, and the GELU-DQN is updated using SGD. The results are compiled in Figure~\ref{fig:reward}. It illustrates that the tGELU-DQN - with gradient clipping for the output layer - outperforms the GELU-DQN by a large margin. In fact, GELU-DQN experiences finite-time explosion during every run of the experiment due to the absence of a target network. We also plotted the value of squared Bellman loss at every step of the experiment in Figure \ref{fig:bellman_loss}. It illustrates that the Bellman loss associated with the GELU-DQN explodes suddenly, following which the agent stops learning while for the agent using tGELU the Bellman loss curve shows a normal behavior. Hence the tGELU-DQN updated using \eqref{eq:dqnupdate} is stable even though there is no target network used.

\begin{figure}
    \centering
    \includegraphics[scale=0.7]{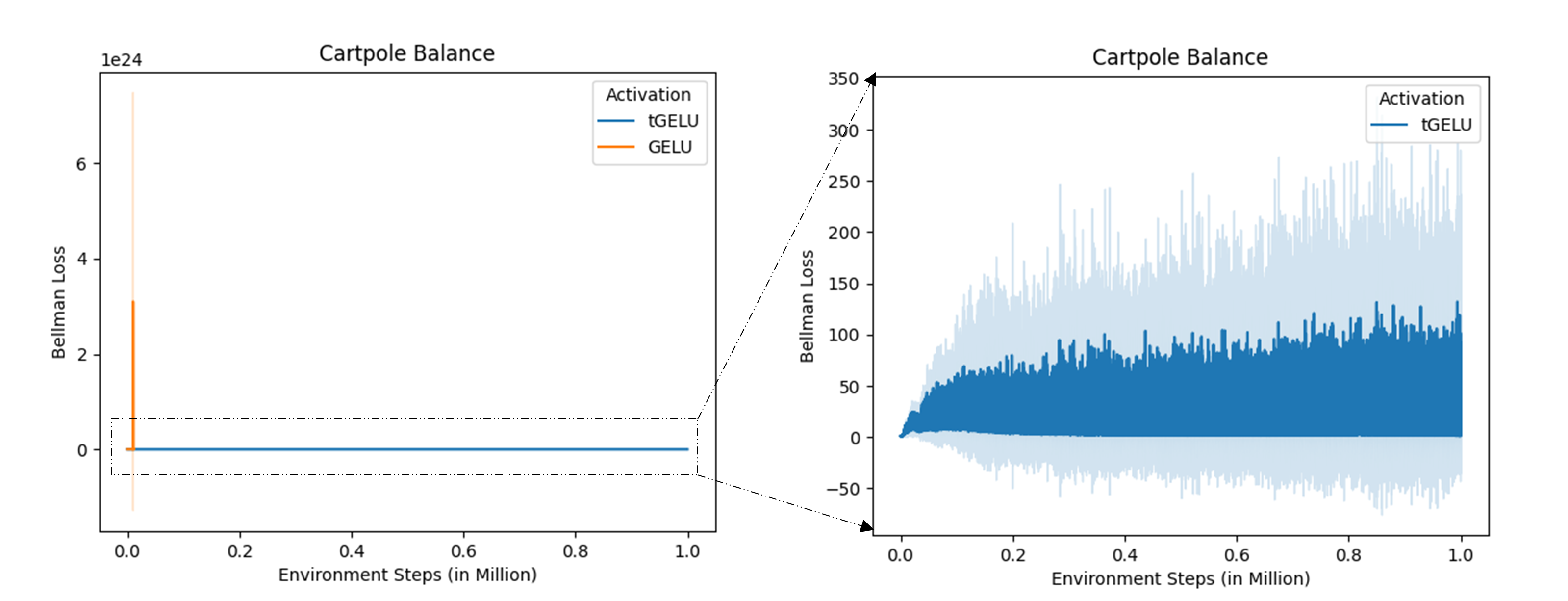}
    \caption{Along x-axis we plot the environment steps, along y-axis, we plot the per-step squared Bellman loss. The blue curve corresponds to tGELU-DQN and the orange curve corresponds to GELU-DQN. The figure to the right zooms onto the blue curve (tGELU performance) alone.}
    \label{fig:bellman_loss}
\end{figure}

For tGELU-DQN updated using \eqref{eq:dqnupdate}, we compared performance when using a target network, to the performance when omitting it. The results are illustrated in the left-plot of Figure~\ref{fig:target}. From this it is clear that inclusion of the target network makes marginal difference to the total reward obtained (see Figure \ref{fig:target}(left)). On the contrary, it is observed that without a target network agent with tGELU performs slightly better. The training of the agent using tGELU is already stable and hence adding target network does not make much of a difference.
We conducted a similar experiment for GELU-DQN \eqref{eq:dqnupdate}. The results from this experiment are illustrated in the right-plot of Figure~\ref{fig:target}. Clearly, using target network substantially improved the performance of the agent as compared to not using one(see Figure \ref{fig:target}(right)).

\begin{figure}
    \centering
    \includegraphics[scale=0.5]{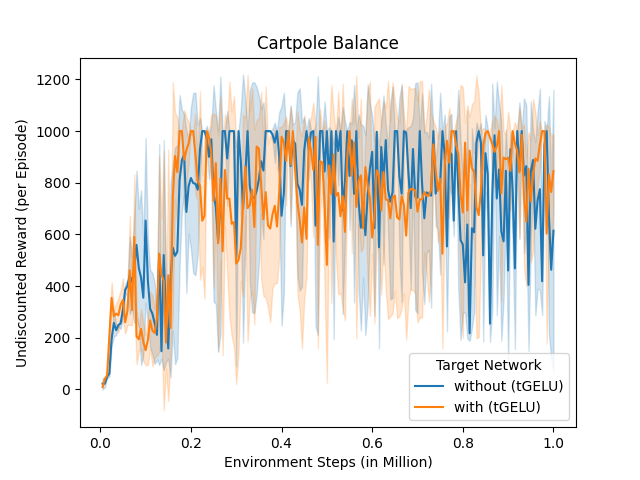}
    \includegraphics[scale=0.5]{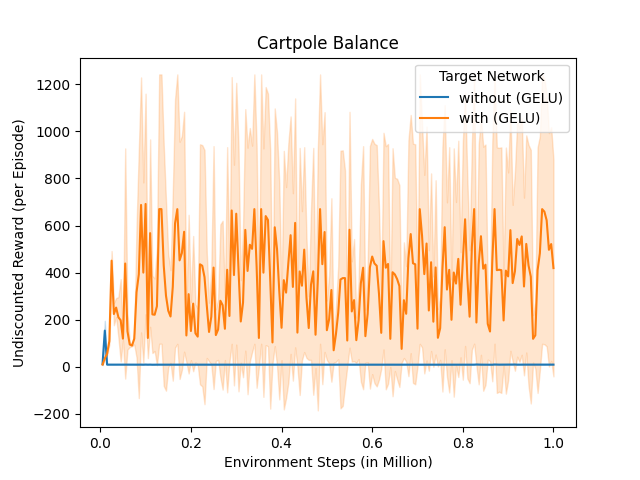}
    \caption{In the left plot, we compare the performance of tGELU-DQN with and without a target network. The blue plot represents the performance without a target network, while the orange plot represents the performance with one. It can be seen that the blue curve is above the orange curve for most of the experiment. The right plot illustrates a similar experiment for GELU-DQN. Here, the use of a target network greatly enhances performance.}
    \label{fig:target}
\end{figure}

Finally, we compare the performance of the tGELU-DQN that is trained using \eqref{eq:dqnupdate}, without a target network, to the performance of GELU-DQN that is trained using SGD while using a target network. The results are illustrated in Figure~\ref{fig:gelu_tgelu}. In Figure \ref{fig:gelu_tgelu} (left), the discounted sum of the rewards is plotted, while in Figure \ref{fig:gelu_tgelu} (right), the total reward is plotted. In the case of discounted reward per episode we individually obtain the discounted summation of reward for 5 episodes and then take the average across the 5 episodes. The same procedure is followed for total reward per episode with the difference that undiscounted summation is used. From Figure \ref{fig:gelu_tgelu}, we observe that for both discounted reward and total reward, the performance difference is marginal and tGELU without target network is slightly better.   

\begin{figure}
    \centering
    \includegraphics[scale=0.5]{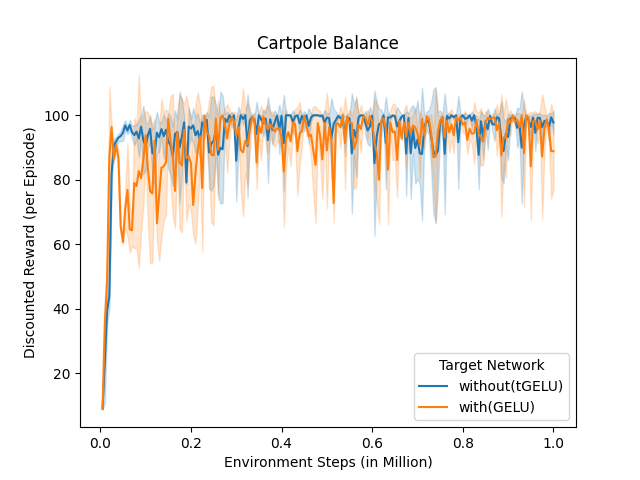}
    \includegraphics[scale=0.5]{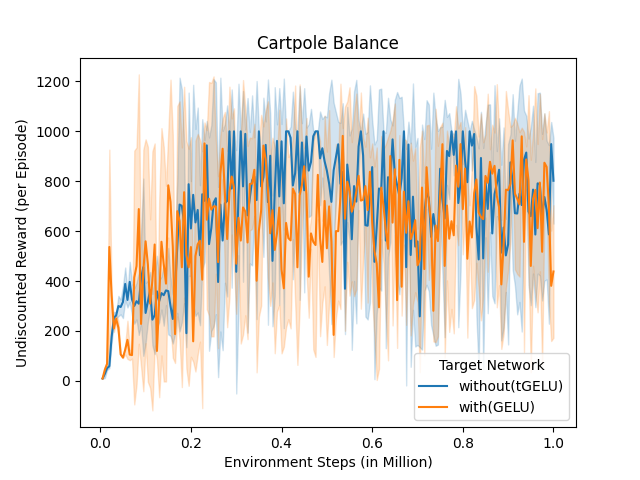}
    \caption{This figure compares the performance of tGELU-DQN updated using \eqref{eq:dqnupdate}, without a target network, and the performance of GELU-DQN updated using SGD with a target network. Along x-axis is plot the $200$ performance evaluations (each performance evaluation takes place after an interval of 5000 environment steps), rescaled so that the range is $0$ to $10^6$. Along the y-axis, we plot the discounted cumulative reward per episode to obtain the left-plot. We plot the total reward per episode along y-axis for the right-plot.}
    \label{fig:gelu_tgelu}
\end{figure}

\section{Conclusions}

In this paper we studied the problem of training DNNs using the stochastic gradient descent algorithm for supervised and unsupervised learning problems. We focused on training stability and performance variability. We analyzed DNNs that were only composed of squashing activations. To train them, we modified SGD so that only the output layer is updated using clipped gradients. The rest of the DNN (input and hidden layers) is updated using standard gradients. We showed that DNNs with squashing activations, trained this way, are numerically stable. In particular, we observed that the input and hidden layers can be stabilized by focusing on stability of the output layer alone. We achieved this by ensuring that the output layer is updated using bounded values - clipped gradients - at every timestep. One important consequence of stability, particularly for DQL is in eliminating the need for target networks.

Our framework leads to DNN updates such that their norms are ``every moment bounded'' for the entire duration of training. This reduced variance results in smooth learning and consistent performance associated with the final set of weights found. Through experiments, we showed that our framework is robust to parameters such as the random seed. Since our framework requires squashing activations, we developed tGELU, a new activation with very desirable properties. Unlike similar ones from literature, tGELU has an extended range and does not suffer from the vanishing gradient problem. Our experiments surrounding DQL suggest that DQN composed of tGELUs, trained using our routine, without a target network, perform better than classic DQL composed of GELUs, trained using a target network. Finally, we showed that the DNN must be initialized using probability distributions with compact supports, e.g., truncated Gaussian or Laplace distributions.

\section*{Acknowledgments.}
SB was supported by a J.C. Bose Fellowship, Project No.~DFTM/02/3125/M/04/AIR-04 from DRDO under DIA-RCOE, a project from DST-ICPS, and the RBCCPS, IISc.


\bibliographystyle{informs2014} 
\bibliography{reference} 


\end{document}